\documentclass{article}
\usepackage[preprint]{neurips_2020}

\usepackage[utf8]{inputenc} 
\usepackage[T1]{fontenc}    
\usepackage{hyperref}       
\usepackage{url}            
\usepackage{booktabs}       
\usepackage{amsfonts}       
\usepackage{nicefrac}       
\usepackage{microtype}      
\usepackage{xcolor}         

\usepackage{amsmath, amssymb, amsthm, bbm}
\usepackage{graphicx,tikz}
\usepackage{enumitem}
\usepackage{algorithm2e}
\usepackage{algpseudocode}
\usepackage{float}
\usepackage{bbm}
\usepackage{caption,hyperref, csquotes}
\usetikzlibrary{matrix}
\theoremstyle{theorem}
\theoremstyle{definition}
\newtheorem{definition}{Definition}
\newtheorem{theorem}{Theorem}
\newtheorem{proposition}{Proposition}
\newtheorem{corollary}{Corollary}

\newtheorem{lemma}{Lemma}
\theoremstyle{remark}

\newtheorem*{remark}{Remark}

\DeclareRobustCommand{\norm}[1]{\left\lVert #1 \right\rVert} 
\DeclareRobustCommand{\norms}[1]{\lVert #1 \rVert} 
\DeclareRobustCommand{\bb}[1]{\mathbb{#1}} 
\DeclareRobustCommand{\c}[1]{\mathcal{#1}}

\DeclareRobustCommand{\beef}{\vphantom{\sum}}

\DeclareRobustCommand{\set}[1]{\left\{#1 \right\}}
\DeclareRobustCommand{\Set}[2][]{\left\{#1 \, \middle|\, #2 \right\}}
\DeclareRobustCommand{\Sets}[2][]{\{#1 \, |\, #2 \}}
\DeclareRobustCommand{\bk}[2]{\left\langle #1,\,#2\right\rangle}

\DeclareRobustCommand{\bks}[2]{\langle #1,\,#2\rangle}

\DeclareRobustCommand{\inv}[1]{#1^{-1}}

\DeclareRobustCommand{\ep}{\varepsilon}

\DeclareRobustCommand{\inv}[1]{{#1}^{-1}}

\DeclareRobustCommand{\a}{\alpha}

\newcommand\restr[2]{{
  \left.\kern-\nulldelimiterspace 
  #1 
  \vphantom{\rule{0pt}{9pt}} 
  \right|_{#2} 
  }}

\DeclareMathOperator{\sign}{sign}

\usepackage{setspace}


\date{\today}

\begin{document}

\title{Classification as Direction Recovery: \\ Improved Guarantees via Scale Invariance}

\author{%
  Suhas Vijaykumar \\
  MIT Economics \& Statistics \\
    \texttt{suhasv@mit.edu} 
\And
  Claire Lazar Reich \\
    MIT Economics \& Statistics\\
      \texttt{clazar@mit.edu} 
}

\maketitle 

\begin{abstract}
Modern algorithms for binary classification rely on an intermediate regression problem for computational tractability. In this paper, we establish a geometric distinction between classification and regression that allows risk in these two settings to be more precisely related.
In particular, we note that classification risk depends only on the direction of the regressor, and we take advantage of this scale invariance to improve existing guarantees for how classification risk is bounded by the risk in the intermediate regression problem. Building on these guarantees, our analysis makes it possible to compare algorithms more accurately against each other and suggests viewing classification as unique from regression rather than a byproduct of it. While regression aims to converge toward the conditional expectation function  in location, we propose that classification should instead aim to recover its direction. 


 \end{abstract}

\doublespacing

\section{Introduction}

The correct assignment of binary labels to data is a fundamental problem of machine learning. Practitioners naturally seek to minimize the portion of observations they misclassify, yet in practice, minimizing a loss function over binary labels and outcomes is computationally intractable. Therefore, modern approaches start with regression, which may be viewed as a convex relaxation of the original classification problem. They identify a real-valued function that minimizes a smooth \emph{surrogate risk criterion} pre-selected by the algorithm designer, and they then threshold resulting predictions to arrive at binary classifications. 

Several influential results in statistical machine learning relate the performance of a classifier to the performance of this intermediate regression procedure \citep{lugosi_bayes-risk_2004,zhang_statistical_2004,doi:10.1198/016214505000000907}. Attempts to attack this problem theoretically have, with few exceptions, proceeded by relating the classification risk to the surrogate risk, reducing the analysis to the better understood problem of stochastic optimization \citep{pmlr-v35-hazan14a,10.1162/089976604773135104}. Results on surrogate risk convergence, when combined with this analysis, produce theoretical guarantees that provide guidance on the choice of surrogate loss for classification tasks.

In this paper, we aim to improve this guidance by taking advantage of a fundamental geometric difference between classification and regression. In particular, we note that classification risk depends only on the direction of a regressor, whereas surrogate risk depends also on its scale. By taking advantage of the scale invariance of classification, we achieve tighter bounds relating classification risk to surrogate risk. The precision gained in these bounds can help practitioners better compare classification procedures and thus design better algorithms. 

Throughout our analysis, we reframe the problem of classification as “direction recovery.” To illustrate, suppose the conditional expectation function $\bb E [Y|X]$ may be represented by some $\beta^*$ a multidimensional vector space. Rather than aim to produce predictions close to $\beta^*$ in \emph{location}, as in traditional regression, we instead aim to produce predictions close to $\beta^*$ in \emph{direction}. We show that procedures designed to converge in direction to $\beta^*$ may achieve lower classification error than those that only seek to minimize regression error, and that upper bounds for existing procedures may be sharpened by studying convergence in direction.
We hope this perspective shows the potential of treating classification not only as a byproduct of regression, but as a unique problem deserving a tailored approach.

\subsection{Outline}

The paper proceeds in four main sections. In Section 2, we identify slack in existing bounds of classification risk and proceed to reduce the slack by introducing a notion of angle $\theta$ between a regressor and its optimal value. We characterize how a small angle $\theta$ minimizes excess classification risk. To achieve small angles in practice, we turn to surrogate loss minimization in Section 3. We show that regularized least squares obtains the optimal classifier when features are uncorrelated, and otherwise, study how regularization biases the predictor away from the optimal direction. Lastly, we present simulations in Section 4 that exemplify how surrogate loss minimization can fail or succeed to minimize the relevant angle.

\subsection{Related Work}
Recently, great attention has been paid to how bounds based on the excess risk alone can be systematically improved based on a margin condition, which says that the posterior probability of a positive label does not concentrate near one-half \citep{mammen_smooth_1999}. While these results are of great conceptual and practical importance, and similarly illustrate how bounds based on the surrogate risk can fail to adequately capture the classification problem, they rely on specific properties of the distribution at hand. In contrast, our results will focus primarily on the structure of classification problems in general, without attention to a specific class of distributions. Our results can, however, be strengthened by combining them with a margin condition, and we  illustrate this in the paper. 

Perhaps most similar to our work in spirit is the example of \citet{10.1007/11503415_20}, in which the excess classification risk decays exponentially fast, whereas the excess surrogate risk decays no faster than $O(1/n)$. While they make use of a margin condition and a specific class of distributions, particular attention is paid to the discrepancy between classification and stochastic optimization, and scale invariance of the optimal set of classifiers is used analytically. Like the authors, in our paper we give attention to the subtle aspects of classification that meaningfully affect classifier performance.
In the words of the authors, 

\begin{quote}
  ``In classification problems, there
are many relevant probabilistic, analytic and geometric parameters to play with
when one studies the convergence rates...  probably, we have
not understood to the end [the] rather subtle interplay between various parameters
that influence the behaviour of this type of classifier.''
\end{quote}

We believe that a powerful such parameter is the notion of direction we introduce in this paper.

Finally, canonical results relating the classification error to the surrogate risk can be found in \cite{doi:10.1198/016214505000000907}, and the results there are foundational to the analysis in this paper. 

\section{Bounding Excess Classification Risk}

In this section, we present a new bound on excess classification risk. To begin, we describe our setting and motivation for why these bounds are useful in practice. Then we show evidence of slack in existing bounds that can be substantially reduced, and finally, we reduce that slack to achieve tighter bounds. 

\subsection{Setting}

Consider a setting where features $X$ fix a linear conditional expectation function $f^*(X) = \bb E [Y|X] $ of a binary outcome $Y \in \{ -1,1\}$. A machine learner minimizes a surrogate convex loss function $\phi$ over data $(X,Y)$ to construct an estimate $f(X)$ of $f^*(X)$. The sign of $f$ fixes their classification decisions $\hat Y \equiv  \sign f$. Although they are minimizing $\phi$-loss, ultimately they care about classification loss: the probability that $f$ takes a different sign than $Y$.
 
Luckily, it has been shown that minimizing the convex surrogate loss successfully can constrain classification error. These guarantees are used in practice for machine learners to ultimately select their learning procedure. 
We discuss one contemporary approach to bound the classification error in the next section, as well as show slack in that approach which can be reduced to achieve tighter bounds. In practice, these tighter bounds can aid investigations of how learning procedures perform. 

\subsection{Finding slack in existing bounds for excess classification risk}

Contemporary bounds generally relate the excess classification risk to some increasing function $\psi$ of the excess $\phi$-risk, taking the form 
\begin{equation}
  \bb{P}(\sign f \ne Y) - \bb{P}(\sign f^* \ne Y) \le \psi(\bb{E}\phi(f,Y) - \bb{E}\phi(f^*,Y))  \label{eq:usual-bound}
\end{equation}
(cf. Thms. 1 and 3 of \citet{doi:10.1198/016214505000000907}). Since the slack is revealed from following the proof behind this bound, we  sketch the main idea.  

The argument on which this bound is based relies first on fixing a value of $f^*$ and then computing the associated excess classification risk of an arbitrary $f$, 
\begin{align}
  \bb{P}(\sign f \ne Y) &- \bb{P}(\sign f^* \ne Y) 
   = \bb{E}\left[|f^*| \mathbbm{1}\{\sign f \ne \sign f^*\} \right] 
 \label{eq:cls-risk-step}
\end{align}
 
 \citet{doi:10.1198/016214505000000907} show how to bound the step function (\ref{eq:cls-risk-step}) by a smooth convex function based on $\phi$-risk. We illustrate their bound in Figure \ref{fig:slack} for a given $f^*$ and shade the associated slack in green (when $\sign f = \sign f^*$) and in yellow (when $\sign f \ne \sign f^*$). The way that we will ultimately reduce this slack is by noting that the LHS of  (\ref{eq:cls-risk-step}) depends only on whether $f^*$ and $f$ share the same sign. 
 Therefore, we have the opportunity to rewrite the bound (\ref{eq:usual-bound}) in terms of a predictor $g$ that i) satisfies $\sign g = \sign f$ so that the LHS of  (\ref{eq:usual-bound}) is unchanged but ii) that corresponds to a tighter convex bound on the RHS of  (\ref{eq:usual-bound}). To do so, we will choose $g$ among the rescalings of $f$, i.e., among all vectors pointing in the direction of $f$. 

\begin{figure}[b]
  \centering
\includegraphics[width=.75 \columnwidth]{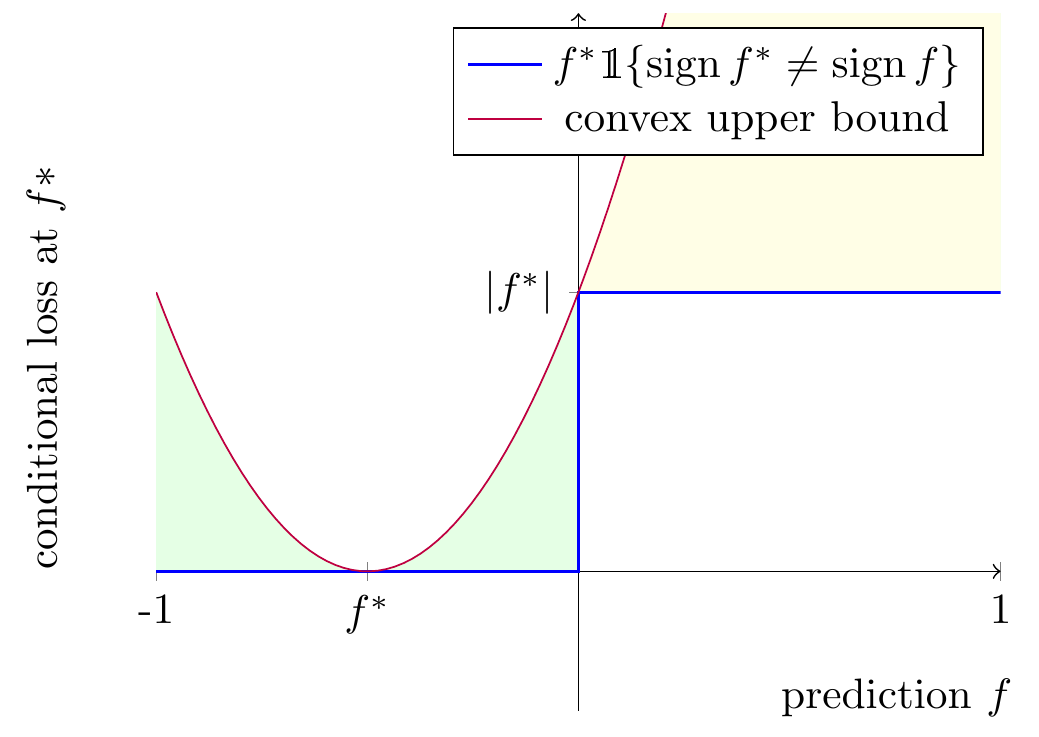}
\caption{\label{fig:slack}Illustration of upper bound for a given $f^*$. The green and yellow regions correspond to slack that we aim to reduce.}
\end{figure}

\subsection{Tightening the bound: a first example}
In order to illustrate the relevance of the direction of $f$ to its associated classification loss, we present a visual example. In particular,
we consider the problem of learning a linear classification rule in the presence of features that satisfy a notion of symmetry called rotational invariance.  
\begin{definition}
The law of $X$ is \emph{rotation invariant} if it satisfies $\bb{P}(X \in S) = \bb{P}(RX \in S)$ for any measurable set $S$ and any rotation $R$ of its coordinates.
\end{definition}

Note that rotational invariance is a stronger condition than uncorrelated features. 
This property allows us to exactly characterize the probability that a linear predictor $\tilde \beta$ yields a different classification from the optimal $\beta^*$, based on the angle $\theta(\tilde \beta, \beta^*)$ between them. The probability grows in $\theta$ as follows.
\begin{figure}
  \centering
\includegraphics[width=.5 \columnwidth]{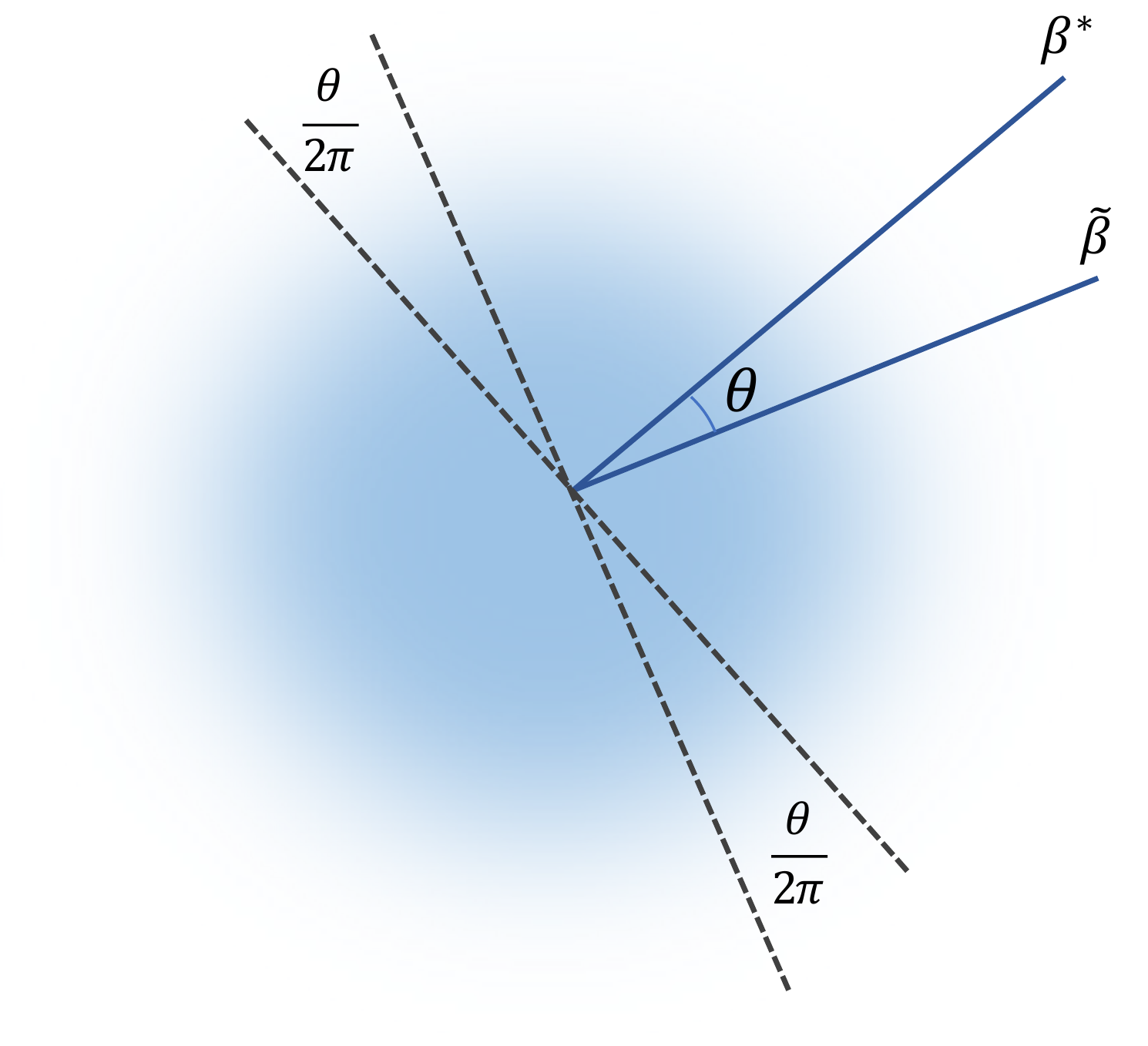}
\caption{\label{fig:theory_diag_3} Top-down view of projection $Z$ on plane spanned by $(\tilde \beta, \beta^*)$ Dashed lines correspond to the classification boundaries associated with each $\beta \in \{ \tilde \beta, \beta^*\}$. Points to the right are positively classified by $\beta$ while those to the left are negatively classified. Thus, the two sectors between the dashed lines (with combined measure of $\frac{\theta}{\pi}$) designate observations that are classified differently by $\tilde \beta$ and $\beta^*$.} 
\end{figure}

\begin{lemma}
  If the law of $X$ is rotation invariant, we have
  \begin{equation}
  \bb{P}(\mathrm{sign}\bks{\beta^*}{X}\ne \mathrm{sign}\bks{\tilde\beta}{X}) = \frac{\theta(\tilde \beta, \beta^*)}{\pi} \label{rot-inv-eq}
  \end{equation}
\end{lemma}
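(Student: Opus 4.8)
The plan is to push everything down to the two-dimensional plane $P=\Span{\tilde\beta,\beta^*}$. The key observation is that $\sign\bks{\beta}{X}$ depends on $X$ only through its orthogonal projection $Z=\Pi_P X$ whenever $\beta\in P$, so both $\sign\bks{\tilde\beta}{X}$ and $\sign\bks{\beta^*}{X}$ are functions of $Z$ alone, and the event in \eqref{rot-inv-eq} is determined by $Z$. First I would show that the law of $Z$ inherits rotation invariance as a distribution on $P\cong\bb R^2$: every rotation $R$ of $P$ extends to a rotation $\bar R$ of $\bb R^d$ that fixes $P^\perp$ pointwise, and since $\Pi_P\bar R=R\,\Pi_P$ and the law of $\bar R X$ equals the law of $X$, the pushforward of the law of $X$ under $\Pi_P$ is fixed by every rotation $R$ of $P$. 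Hence $Z$ is a rotation-invariant random vector in the plane.

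Next I would invoke the standard fact that a rotation-invariant probability measure on $\bb R^2$, once conditioned on $\{Z\ne 0\}$, has angular part uniformly distributed on the circle. On the complementary event $\{Z=0\}$ we have $\bks{\tilde\beta}{X}=\bks{\beta^*}{X}=0$, so the two signs agree there and this event contributes nothing to the probability of disagreement; assuming (as the intended setting does, e.g.\ whenever the law of $X$ is nonatomic) that $\bb P(Z=0)=0$, it suffices to work on the unit circle of $P$ equipped with the uniform law.

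The remaining step is elementary planar geometry, exactly the content of Figure~\ref{fig:theory_diag_3}. In $P$ the decision boundary of $\beta$ is the line through the origin orthogonal to $\beta$, and $\sign\bks{\beta}{Z}$ records which open half-plane contains $Z$. The two boundary lines for $\tilde\beta$ and $\beta^*$ cut the punctured plane into four open sectors; antipodal sectors carry identical sign pairs, and the two sectors on which $\tilde\beta$ and $\beta^*$ disagree each subtend an angle exactly $\theta(\tilde\beta,\beta^*)$, since rotating a line by $\theta$ rotates its normal by $\theta$. Concretely, writing $A$ and $B$ for the two half-circles on which $\bks{\tilde\beta}{Z}>0$ and $\bks{\beta^*}{Z}>0$, both of angular length $\pi$ and offset by $\theta$, the disagreement set is the symmetric difference $A\triangle B$, of total angular measure $2\pi-2\lvert A\cap B\rvert=2\theta$; dividing by $2\pi$ yields $\theta/\pi$.

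Finally I would dispose of the degenerate case in which $\tilde\beta$ and $\beta^*$ are linearly dependent, so that $P$ is a line (or $\{0\}$): if they point the same way then $\theta=0$ and the signs always agree, while if they point oppositely then $\theta=\pi$ and they disagree off a null set; either way the formula holds. I expect the only real care to be needed in the first two steps — making the rotation-invariance transfer under projection precise, and ruling out (or bookkeeping) a possible atom of $Z$ at the origin — whereas the heart of the argument, the ``$2\theta$'' sector count, is just the picture.
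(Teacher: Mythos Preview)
Your proposal is correct and follows essentially the same route as the paper: project into the plane $\Span{\tilde\beta,\beta^*}$, use rotation invariance to make the angular coordinate uniform on $S^1$, and read off the disagreement region as two sectors of combined measure $\theta/\pi$. Your version is in fact more careful than the paper's sketch --- you justify why rotation invariance survives the projection, you bookkeep the possible atom at $Z=0$, and you handle the degenerate collinear case --- whereas the paper simply asserts that the projected point is ``uniformly distributed on the circle'' and appeals to Figure~\ref{fig:theory_diag_3}.
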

\begin{proof}
  We have 
  \begin{align*}
     \bb{P}(\mathrm{sign}\bks{\beta^*}{X}\ne \mathrm{sign}\bks{\tilde\beta}{X}) 
    = \bb{P}\left(\mathrm{sign}\bk{\tilde\beta}{\frac{X}{\norms{X}_2}}\ne \sign \bk{\beta^*}{\frac{X}{\norms{X}_2}}\right)
  \end{align*}
Now let $Z$ denote the projection of $X/\norms{X}_2$ onto the plane spanned by $(\tilde \beta, \beta^*)$. We know $Z$ is uniformly distributed on the circle by rotation invariance, and the sign associated with $\tilde \beta$ and with $\beta^*$  will differ precisely when $Z$ belongs to a subset of the circle of measure $\theta/\pi$. This is illustrated in Figure \ref{fig:theory_diag_3}, which visualizes the projection $Z$ from above. 
\end{proof}

The key insight which leads to the angle $\theta(\beta^*,\beta)$ on the RHS of (\ref{rot-inv-eq})
 is that the classification rules $x \mapsto \sign\bk{\beta}{x}$ and $x \mapsto \sign\bk{\beta^*}{x}$ are invariant to rescaling the linear predictors $\beta$ and $\beta^*$. The angle, which corresponds to the distance between $\beta/\norm{\beta}$ and $\beta^*/\norm{\beta^*}$ along the surface of the unit sphere, emerges as a natural, \emph{scale invariant} measure of the distance between the two predictors. In the following section, we will see that this intuition extends far beyond the simple case of rotationally invariant features.

\subsection{The general angle criterion}

In the general case, the situation is more nuanced. However, the key insight that the classifier $\mathbbm{1}\{f \ge 0\}$ is invariant to rescaling $f$ remains.

 To begin, we need to know how to actually express the angle $\theta$ between two vectors $u$ and $v$. In the following lemma, we use a geometric argument to write $\sin \theta$ as the minimum distance between a rescaled $u$ and a normalized $v$.

 \begin{lemma} \label{lemma-sin-theta} The angle $\theta(u,v)$ between vectors $u, v \in \bb{R}^d$ with $\bk{u}{v} > 0$ satisfies
  \[\sin \theta(u,v) =  \inf_{t \ge 0} \norm{tu - \frac{v}{\norm{v}_2}},\]
  and $\theta(-u,v) = \pi - \theta(u,v)$ for all $u,v\in \bb{R}^d$.
\end{lemma}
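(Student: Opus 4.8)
The plan is to reduce the whole statement to a one-dimensional quadratic minimization. Writing $\hat v = v/\norm{v}_2$ for the unit vector in the direction of $v$, I would expand
\[
\norm{tu - \hat v}^2 = t^2 \norm{u}_2^2 - 2t\,\bk{u}{\hat v} + 1,
\]
which is a convex quadratic in the scalar $t$. Its unconstrained minimizer is $t^\star = \bk{u}{\hat v}/\norm{u}_2^2$, and the hypothesis $\bk{u}{v} > 0$ forces $u \neq 0$ (so $t^\star$ is well defined) and $t^\star > 0$ (so the constraint $t \ge 0$ is inactive). Hence the infimum over $t \ge 0$ equals the global minimum, namely $1 - \bk{u}{\hat v}^2/\norm{u}_2^2$.

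Next I would identify this quantity with $\sin^2\theta(u,v)$. By definition of the angle, $\cos\theta(u,v) = \bk{u}{v}/(\norm{u}_2\norm{v}_2) = \bk{u}{\hat v}/\norm{u}_2$, which is positive and at most $1$ by Cauchy--Schwarz, so $\theta(u,v) \in [0,\pi/2)$. Then $1 - \bk{u}{\hat v}^2/\norm{u}_2^2 = 1 - \cos^2\theta(u,v) = \sin^2\theta(u,v)$, and taking the nonnegative square root (legitimate since $\sin\theta \ge 0$ on $[0,\pi]$ and the left-hand side is a norm) gives the claimed identity.

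For the reflection identity I would use only $\bk{-u}{v} = -\bk{u}{v}$, so $\cos\theta(-u,v) = -\cos\theta(u,v)$; since $t \mapsto \cos t$ is a decreasing bijection from $[0,\pi]$ onto $[-1,1]$ and both angles lie in $[0,\pi]$, this yields $\theta(-u,v) = \pi - \theta(u,v)$. Combined with the first part applied to $-u$, this also covers the case $\bk{u}{v} < 0$, while $\bk{u}{v} = 0$ corresponds to $\theta = \pi/2$; so the two displays together describe every pair of nonzero vectors.

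There is no real obstacle here — the one point worth being careful about is verifying that the sign hypothesis $\bk{u}{v} > 0$ is exactly what makes the constraint $t \ge 0$ harmless: without it the infimum would be attained at $t = 0$ with value $1$ rather than $\sin^2\theta$. I would also note explicitly that the degenerate inputs $u = 0$ or $v = 0$ are excluded by $\bk{u}{v} > 0$, so every expression above is well defined.
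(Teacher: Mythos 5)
Your proof is correct, and it is the algebraic counterpart of the paper's argument rather than a literal copy of it. The paper proves the identity by a picture: it observes that $\norm{tu - \frac{v}{\norm{v}_2}}$ is minimized at $t = t^*$ where $t^*u$ is the orthogonal projection of $v/\norm{v}_2$ onto the line spanned by $u$, and then reads $\sin\theta$ off the right triangle with vertices $v/\norm{v}_2$, $0$, $t^*u$, whose hypotenuse has length $1$. Your quadratic expansion computes exactly that projection ($t^\star u$ with $t^\star = \bk{u}{\hat v}/\norm{u}_2^2$ is the projection of $\hat v$ onto $\mathrm{span}(u)$), so the underlying geometry is the same; what your route buys is explicitness on the points the paper glosses over. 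In particular, you isolate precisely where the hypothesis $\bk{u}{v} > 0$ is used (it makes the constraint $t \ge 0$ inactive, so the constrained infimum equals the unconstrained minimum $1 - \cos^2\theta(u,v) = \sin^2\theta(u,v)$), and you actually prove the second claim $\theta(-u,v) = \pi - \theta(u,v)$ via $\cos\theta(-u,v) = -\cos\theta(u,v)$ and monotonicity of cosine on $[0,\pi]$, which the paper's proof does not address at all. The paper's version buys brevity and geometric intuition (and matches its Figure 3); yours buys a self-contained verification, including the observation that $\bk{u}{v} > 0$ rules out the degenerate cases $u = 0$ or $v = 0$. One cosmetic caveat: the "for all $u,v$" in the reflection identity implicitly still requires both vectors to be nonzero for the angle to be defined, as your own remark makes clear.
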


\begin{proof} As seen in Figure \ref{fig:theory_diag_2}, the distance $\norm{tu - \frac{v}{\norm{v}_2}}$ is minimized when $t=t^*$, where $t^*u$ is equal to the orthogonal projection of $v/\norms{v}$ onto the line spanned by $u$. As seen, $(v/\norms{v}, 0,t^*u)$ forms a right triangle with hypotenuse of length $1$, and $(t^*u, v/\norms{v})$ is opposite the angle $\theta(u,v)$.
\end{proof}
\begin{figure}
  \centering
\includegraphics[width=.6 \columnwidth]{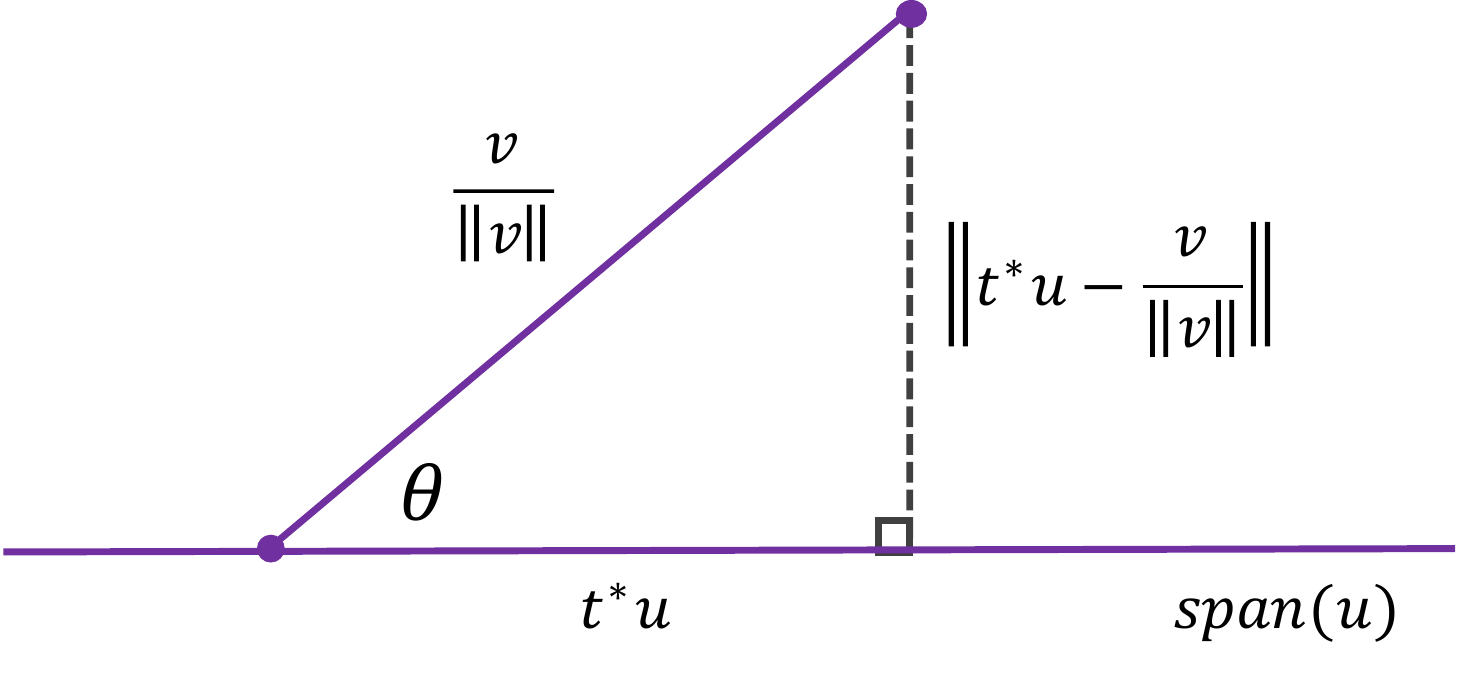}
\caption{\label{fig:theory_diag_2}  $\sin \theta$ is the shortest distance between $span(u)$ and $\frac{v}{\norm v}$.} 
\end{figure}

Now that we no longer require the law of $X$ to be rotation invariant, we must deal directly with  $L^2(\bb P)$. This requires us to define the relevant angle of a predictor with respect to the norm $\norm{f}_{2,\bb P} = \bb{E}[f^2]^{\frac 1 2}$ in the probability space. Motivated by Lemma \ref{lemma-sin-theta}, we generalize our notion of the relevant angle $\theta_{2,\bb{P}}(\beta,\beta^*)$ to $L^2(\bb P)$ by the relation
\begin{equation}
  \sin\theta_{2,\bb P}(u,v) = \inf_{t > 0}\norm{tu - \frac{v}{\norm{v}_{2,\bb P}}}_{2,\bb P}, \label{eq:general-sin}
\end{equation}
when $ \bb{E}[uv]>0$  and  $\theta_{2,\bb P}(u,v) = \pi - \theta_{2,\bb P}(-u,v)$  when  $ \bb{E}[uv]<0$.


{Equipped with the suitable notion of angle, we can establish a main result. Here we bound the excess classification risk of an arbitrary predictor $f$ according to the direction of $f$. Note that while the $L^2(\bb P)$ distance and the square loss are essential ingredients in its proof, the result applies to \emph{any} predictor, however it is obtained. } \
\begin{theorem}\label{thm:sin-bound}
  Let $f^* = \bb{E}[Y|X]$. Then, the excess classification risk is bounded as 
  \[\bb{P}(Y \ne \sign f) - \bb{P}(Y \ne \sign f^*) \le \norm{f^*}_{2,\bb{P}} \sin \theta_{2,\bb P}(f,f^*).\]
\end{theorem}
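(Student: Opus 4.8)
The plan is to start from the exact identity for excess classification risk already recorded in (\ref{eq:cls-risk-step}),
\[
  \bb{P}(Y \ne \sign f) - \bb{P}(Y \ne \sign f^*) = \bb{E}\!\left[\,|f^*|\,\mathbbm{1}\{\sign f \ne \sign f^*\}\,\right],
\]
and to exploit the scale invariance of the right-hand side: for any $t \ge 0$ the event $\{\sign f \ne \sign f^*\}$ is identical to $\{\sign(tf) \ne \sign f^*\}$. The key first step is a pointwise inequality. On the event $\{\sign f \ne \sign f^*\}$ the numbers $tf(X)$ and $f^*(X)$ have opposite signs (and where $f^*(X)=0$ the integrand vanishes regardless), so subtracting $tf(X)$ can only push $f^*(X)$ further from the origin; hence $|f^*(X)| \le |f^*(X) - tf(X)|$ on that event, and therefore $|f^*|\,\mathbbm{1}\{\sign f \ne \sign f^*\} \le |f^* - tf|$ holds pointwise, for every $t \ge 0$.

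Taking expectations and then bounding the $L^1(\bb P)$ norm by the $L^2(\bb P)$ norm (Jensen, or Cauchy--Schwarz against $\mathbbm{1}$) gives
\[
  \bb{P}(Y \ne \sign f) - \bb{P}(Y \ne \sign f^*) \;\le\; \bb{E}\,|f^* - tf| \;\le\; \norm{f^* - tf}_{2,\bb P} \qquad \text{for all } t \ge 0 .
\]
Since the left side is independent of $t$, I can take the infimum over $t \ge 0$ on the right. Factoring out $\norm{f^*}_{2,\bb P}$ and substituting $s = t/\norm{f^*}_{2,\bb P}$ turns the infimum into $\norm{f^*}_{2,\bb P}\inf_{s \ge 0}\norm{sf - f^*/\norm{f^*}_{2,\bb P}}_{2,\bb P}$, and by the defining relation (\ref{eq:general-sin}) the remaining infimum is exactly $\sin\theta_{2,\bb P}(f,f^*)$. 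That yields the claimed bound.

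I expect the main (really, the only) subtlety to be this last identification: $\inf_{s\ge 0}\norm{sf - f^*/\norm{f^*}_{2,\bb P}}_{2,\bb P}$ is the $L^2(\bb P)$ distance from the \emph{ray} $\{sf : s\ge 0\}$ to the unit vector $f^*/\norm{f^*}_{2,\bb P}$, and this coincides with $\sin\theta_{2,\bb P}(f,f^*)$ precisely in the acute regime $\bb E[ff^*] > 0$, which is exactly the case (\ref{eq:general-sin}) encodes with $u = f$. (When $\bb E[ff^*] = 0$ the infimum equals $1$ and the bound is immediate from $\bb E|f^*| \le \norm{f^*}_{2,\bb P}$; a regressor with $\bb E[ff^*] < 0$ should be read through $\sign(-f)$, to which the bound then applies.) Everything else — the pointwise inequality and the passage to $L^2$ — is routine once the scale-invariance observation is in hand, so the whole argument essentially reduces to recognizing that rescaling $f$ is free on the left but buys a tighter convex majorant on the right.
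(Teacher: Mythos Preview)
Your proof is correct and follows essentially the same line as the paper's: exploit the scale invariance $\sign(tf)=\sign f$ to replace $f$ by $tf$, bound the excess risk by $\norm{tf-f^*}_{2,\bb P}$, and then take the infimum over $t$ to land on $\norm{f^*}_{2,\bb P}\sin\theta_{2,\bb P}(f,f^*)$. The one difference is that the paper obtains the intermediate inequality $\bb P(Y\ne\sign f)-\bb P(Y\ne\sign f^*)\le \norm{g-f^*}_{2,\bb P}$ for $g\in C_f$ by invoking Theorem~\ref{thm:bartlett-square} of \citet{doi:10.1198/016214505000000907} as a black box, whereas you reprove that special case directly via the pointwise observation $|f^*|\,\mathbbm 1\{\sign f\ne\sign f^*\}\le |f^*-tf|$ followed by Cauchy--Schwarz; this makes your argument self-contained and slightly more elementary, but the two routes are otherwise identical.
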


We prove this result using the canonical bound given by Theorem 1 in \citet{doi:10.1198/016214505000000907}, which relates the classification risk in excess of $f^*$ to the excess surrogate loss. Stated for the square loss, the result reduces to the following. 
\begin{theorem}[{\citet[Thm. 1]{doi:10.1198/016214505000000907}}]\label{thm:bartlett-square}
  \begin{equation*}
    \bb{P}(Y \ne \sign f) - \bb{P}(Y \ne \sign f^*) \le \norm{f-f^*}_{2,\bb P}\label{eq:bartlett-square}
  \end{equation*}
\end{theorem}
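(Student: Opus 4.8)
The plan is to derive the bound directly from the exact expression for the excess classification risk already recorded in equation (\ref{eq:cls-risk-step}), namely
\[\bb{P}(\sign f \ne Y) - \bb{P}(\sign f^* \ne Y) = \bb{E}\left[|f^*|\,\mathbbm{1}\{\sign f \ne \sign f^*\}\right],\]
which follows from conditioning on $X$ and using that $f^* = \bb{E}[Y|X]$ determines the conditional law of the binary label $Y \in \{-1,1\}$. From this starting point the entire argument reduces to a single pointwise inequality comparing the integrand to $|f - f^*|$, followed by a passage from $L^1(\bb{P})$ to $L^2(\bb{P})$.

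First I would establish the key pointwise bound: on the event $\{\sign f \ne \sign f^*\}$ one has $|f^*| \le |f - f^*|$. The reasoning is that when $f$ and $f^*$ carry opposite signs (or one of them vanishes), the origin lies between them on the real line, so the distance from $f^*$ to $f$ is at least the distance from $f^*$ to $0$. Concretely, if $f^* > 0 \ge f$ then $f - f^* \le -f^*$, whence $|f - f^*| \ge f^* = |f^*|$, and the symmetric case $f^* < 0 \le f$ is identical. Multiplying by the indicator and then discarding it (legitimate since the surviving factor is nonnegative) gives the inequality
\[|f^*|\,\mathbbm{1}\{\sign f \ne \sign f^*\} \le |f - f^*|\,\mathbbm{1}\{\sign f \ne \sign f^*\} \le |f - f^*|,\]
valid pointwise everywhere.

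Next I would take expectations of both sides and apply Cauchy--Schwarz against the constant function $1$ (equivalently, Jensen's inequality for the concave square root) to convert the resulting $L^1$ quantity into the desired $L^2$ norm:
\[\bb{E}\left[|f^*|\,\mathbbm{1}\{\sign f \ne \sign f^*\}\right] \le \bb{E}\,|f - f^*| \le \bb{E}\left[(f - f^*)^2\right]^{1/2} = \norm{f - f^*}_{2,\bb{P}}.\]
Chaining this with the identity displayed above yields the claimed bound.

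There is no serious obstacle here: the substantive content is the elementary geometric fact that sign disagreement forces $|f^*| \le |f - f^*|$ pointwise, and everything else is bookkeeping. The only point deserving minor care is the treatment of ties, i.e.\ the events where $f = 0$ or $f^* = 0$. These are harmless: on $\{f = 0\}$ the inequality $|f^*| \le |f - f^*|$ holds with equality, and on $\{f^* = 0\}$ the integrand vanishes identically, so neither the choice of convention for $\sign$ at the origin nor the boundary itself affects the final estimate.
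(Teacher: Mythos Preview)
Your proof is correct. Note, however, that the paper does not actually supply its own proof of this theorem: it is quoted as a known result from \citet{doi:10.1198/016214505000000907} and used as a black box in the proof of Theorem~\ref{thm:sin-bound}. That said, your argument is precisely the standard one, and it dovetails with the informal discussion the paper gives in Section~2.2: starting from the identity \eqref{eq:cls-risk-step}, you make explicit the pointwise inequality $|f^*|\,\mathbbm{1}\{\sign f \ne \sign f^*\} \le |f - f^*|$ that underlies the ``smooth convex bound on the step function'' alluded to there (and pictured in Figure~\ref{fig:slack} for the square loss), then pass from $L^1$ to $L^2$ by Cauchy--Schwarz. There is nothing to add or correct.
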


We use the canonical theorem to prove our new result.

\begin{proof}[Proof of Theorem \ref{thm:sin-bound}]
  Let $C_f$ denote the convex cone of functions $g$ satisfying
  $\sign g = \sign f$ almost surely. 
  For any $g \in C_f$ we can apply Theorem \ref{thm:bartlett-square} to obtain 
  \begin{align*}
    \bb{P}(Y \ne \sign f) - \bb{P}(Y \ne \sign f^*)
    &= \bb{P}(Y \ne \sign g) - \bb{P}(Y \ne \sign f^*) \\
    &\le \norm{g-f^*}_{2,\bb P}.
  \end{align*}
  Optimizing over the bounds obtained in this manner yields
  \begin{equation}
    \bb{P}(Y \ne \sign f) - \bb{P}(Y \ne \sign f^*) \le \inf_{g \in C_f}\left\{\beef\norm{g-f^*}_{2,\bb P}\right\}
  \end{equation}
  While we cannot tractably minimize over all $g$ in the cone $C_f$, we can minimize over the $g$ that rescale $f$, noting that these rescalings satisfy $\Set[tf]{t > 0} \subset C_f$. This gives us the bound  
  \begin{align*}
    \bb{P}(Y \ne \sign f) - \bb{P}(Y \ne \sign f^*)
    &\le \inf_{t > 0 }\left\{\beef\norm{tf-f^*}_{2,\bb P}\right\} \\
    &= \norm{f^*}_{2, \bb P} \inf_{t > 0}\left\{\norm{\frac{tf}{\norm{f^*}_{2, \bb P}}-\frac{f^*}{\norm{f^*}_{2, \bb P}}}_{2,\bb P}\right\}.
    \intertext{Making the change of variable $t'= \norm{f^*}_{2, \bb P}t$ gives}
    &= \norm{f^*}_{2, \bb P} \inf_{t' > 0}\left\{\norm{t'f-\frac{f^*}{\norm{f^*}_{2, \bb P}}}_{2,\bb P}\right\} \\
    &= \norm{f^*}_{2, \bb P} \sin \theta_{2,\bb P}(f,f^*),
  \end{align*}
  by our definition of $\theta_{2, \bb P}$. This is what we aimed to show. 
\end{proof}
In fact, \citet{doi:10.1198/016214505000000907} proved stronger versions of Theorem \ref{thm:bartlett-square} under the \emph{low-noise condition} (also sometimes called a \emph{margin condition}), and the same machinery can be applied to yield stronger versions of our Theorem \ref{thm:sin-bound}. To state these, we first introduce the low-noise condition. It characterizes the extent to which the best prediction of the outcome is close to the classification boundary.
\begin{definition}
Given $\a \in [0,1]$ the pair $(X,Y)$ is said to satisfy the $\a$-noise condition if for some $C > 0$, $f^*=\bb{E}[Y|X]$ satisfies
\begin{equation*}
  \bb{P}\left(\left|f^* \right| < \ep \right) \le C\ep^{\a/(1-\a)},  
\end{equation*}
for all sufficiently small $0 < \ep < c$. 
\end{definition}
Under the above condition, \citet{doi:10.1198/016214505000000907} proved the following improvement on their bound.
\begin{theorem}[{Special case of \citet[Theorem 3]{doi:10.1198/016214505000000907}}]
  Suppose $(X,Y)$ satisfies the $\a$-noise condition with constants $c, C > 0$. Then 
  \[\bb{P}(\sign f \ne Y) - \bb{P}(\sign f^* \ne Y) \le \frac{\norm{f-f^*}_{2,\bb P}^{1 + \a}}{4c'}\] for some $c'$ which depends only on $c$ and $C$.\label{thm:bartlett-margin}
\end{theorem}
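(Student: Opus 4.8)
The plan is to obtain this as a specialization of the low‑noise refinement of the calibration inequality, in exact parallel with how Theorem~\ref{thm:bartlett-square} was read off from the general calibration bound for the square loss. The inputs I need are the three facts that make the square loss $\phi(f,y)=(1-yf)^2$ transparent in the \citet{doi:10.1198/016214505000000907} framework: (i)~it is classification‑calibrated (it is convex, differentiable at $0$ with negative derivative there); (ii)~its conditional $\phi$‑risk is minimized pointwise at $f^*(x)=\bb E[Y\mid X=x]$; and (iii)~its excess $\phi$‑risk decomposes \emph{exactly}, $\bb E\phi(f,Y)-\bb E\phi(f^*,Y)=\norm{f-f^*}_{2,\bb P}^2$. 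Fact~(iii) together with the square‑loss calibration function $\psi(\theta)=\theta^2$ already reproduces Theorem~\ref{thm:bartlett-square}; the content of \citet[Thm.~3]{doi:10.1198/016214505000000907} is that, once the $\a$‑noise condition is imposed, $\psi$ may be replaced by a steeper, noise‑adjusted transform. Substituting $\psi(\theta)=\theta^2$ and this noise exponent into their conclusion, and replacing the excess $\phi$‑risk by $\norm{f-f^*}_{2,\bb P}^2$, is what yields the power $1+\a$ and a constant $c'$ depending only on $c$ and $C$.

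An alternative, self‑contained derivation — useful as a sanity check — starts from the identity already recorded in \eqref{eq:cls-risk-step}, namely $\bb P(Y\ne\sign f)-\bb P(Y\ne\sign f^*)=\bb E[\,|f^*|\,\mathbbm 1\{\sign f\ne\sign f^*\}\,]$, together with the elementary pointwise fact that on the disagreement event $|f-f^*|\ge|f^*|$ (there in fact $|f-f^*|=|f|+|f^*|$). I would then peel the expectation at a threshold $\ep>0$: on $\{|f^*|<\ep\}$ the integrand is at most $\ep\,\mathbbm 1\{|f^*|<\ep\}$, whose expectation the $\a$‑noise condition bounds by $C\ep^{1/(1-\a)}$; on $\{|f^*|\ge\ep\}$ the disagreement event forces $|f-f^*|\ge\ep$, so that contribution is a negative power of $\ep$ times a moment of $|f-f^*|$ that reduces — via Jensen and, if needed, a uniform bound on $|f-f^*|$ — to $\norm{f-f^*}_{2,\bb P}^2$. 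Balancing the two terms over $\ep$ then produces a bound of the form $(\text{const})\cdot\norm{f-f^*}_{2,\bb P}^{\,p}$, and one reads off the constant.

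The main obstacle is bookkeeping rather than ideas, and it is specifically getting the exponent right: the crude peeling just sketched only delivers the power $2/(2-\a)$, which is \emph{weaker} than the claimed $1+\a$ (note $1+\a\ge 2/(2-\a)$, with equality only at $\a\in\{0,1\}$), so to land exactly on $1+\a$ one must follow \citet{doi:10.1198/016214505000000907}'s sharper, transform‑based argument rather than a naive split, and then track how $c$, $C$, and any uniform bound on $|f-f^*|$ combine into $c'$ and the factor $4$. A secondary point to watch is that the $\a$‑noise condition is only assumed for $\ep$ below some threshold $c$, so one must verify that the optimizing $\ep$ falls in that range once $\norm{f-f^*}_{2,\bb P}$ is small enough — which is precisely the regime in which the stated bound carries content.
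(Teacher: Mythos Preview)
The paper does not supply its own proof of this statement: it is quoted directly as a special case of \citet[Theorem~3]{doi:10.1198/016214505000000907} and then used as a black box in the proof of Theorem~\ref{thm:sin-margin}. Your first route---specializing their low-noise refinement to the square loss via the three facts (i)--(iii), in particular that the excess square loss equals $\norm{f-f^*}_{2,\bb P}^2$ and that the square-loss calibration function is $\psi(\theta)=\theta^2$---is exactly how the paper arrives at the displayed inequality, so on that count your proposal matches.

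Your second, self-contained peeling argument is a genuine addition beyond what the paper does, and your diagnosis of it is accurate: the split at $\{|f^*|<\ep\}$ combined with $|f-f^*|\ge|f^*|$ on the disagreement event and a Markov/Cauchy--Schwarz step only yields the exponent $2/(2-\a)$, strictly weaker than the claimed $1+\a$ on $(0,1)$. You are right that closing this gap requires the sharper transform-based machinery of the cited reference rather than a naive threshold split; since you explicitly flag this and defer to \citet{doi:10.1198/016214505000000907} for the sharp exponent, there is no real gap in your proposal---you have simply (and correctly) identified the limit of the elementary argument.
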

Repeating the proof of Theorem \ref{thm:sin-bound}, and replacing our use of Theorem \ref{thm:bartlett-square} with the improved Theorem \ref{thm:bartlett-margin}, gives the following improved result.
\begin{theorem}\label{thm:sin-margin}
  Suppose $(X,Y)$ satisfies the $\a$-noise condition with constant $c > 0$. Then, for the same $c'$ appearing in Theorem \ref{thm:bartlett-margin}, it holds that 
  \begin{align}
    \bb{P}(\sign f \ne Y) - \bb{P}(\sign f^* \ne Y)
    &\le \inf_{g \in C_f} \left\{ \frac{\norm{g-f^*}_{2,\bb P}^{1 + \a}}{4c'} \right\} \\
    &\le \frac{\norm{f^*}_{2, \bb P}^{1+\a}}{4c'} \left( \sin \theta_{2,\bb P}(f,f^*)\right)^{1+\a}
  \end{align}
\end{theorem}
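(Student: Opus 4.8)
The plan is to run the proof of Theorem~\ref{thm:sin-bound} again, changing only its single appeal to Theorem~\ref{thm:bartlett-square} into an appeal to the sharper Theorem~\ref{thm:bartlett-margin}. First I would recall the convex cone $C_f = \{\, g : \sign g = \sign f \text{ a.s.}\,\}$ from that proof. Since the excess classification risk $\bb{P}(\sign f \ne Y) - \bb{P}(\sign f^* \ne Y)$ depends on $f$ only through $\sign f$, for every $g \in C_f$ we may replace $f$ by $g$ without changing the left-hand side; and because the $\alpha$-noise condition is a hypothesis on $(X,Y)$ rather than on the predictor, Theorem~\ref{thm:bartlett-margin} applies to $g$ and yields $\bb{P}(\sign f \ne Y) - \bb{P}(\sign f^* \ne Y) \le \norm{g - f^*}_{2,\bb P}^{1+\alpha}/(4c')$. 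Taking the infimum over $g \in C_f$ gives the first displayed inequality.

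For the second inequality I would restrict the infimum to the ray of rescalings $\{\, tf : t > 0 \,\}$, which is contained in $C_f$, obtaining the upper bound $\inf_{t > 0}\norm{tf - f^*}_{2,\bb P}^{1+\alpha}/(4c')$. Since $s \mapsto s^{1+\alpha}$ is continuous and increasing on $[0,\infty)$, the infimum of $\norm{tf - f^*}_{2,\bb P}^{1+\alpha}$ over $t > 0$ equals $\bigl(\inf_{t > 0}\norm{tf - f^*}_{2,\bb P}\bigr)^{1+\alpha}$. Finally, exactly as in the proof of Theorem~\ref{thm:sin-bound} --- factoring out $\norm{f^*}_{2,\bb P}$ and substituting $t' = \norm{f^*}_{2,\bb P}\,t$ --- one has $\inf_{t > 0}\norm{tf - f^*}_{2,\bb P} = \norm{f^*}_{2,\bb P}\sin\theta_{2,\bb P}(f,f^*)$, and raising both sides to the power $1+\alpha$ produces the claimed bound.

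The only ingredient not already present in Theorem~\ref{thm:sin-bound} is the commutation of the infimum with the power $1+\alpha$, and this is immediate from monotonicity; it is also convenient that the relevant angles lie in $[0,\pi]$, so $\sin\theta_{2,\bb P}(f,f^*) \ge 0$ and the final expression is well defined, with the case $\bb{E}[ff^*] < 0$ handled by the definition of $\theta_{2,\bb P}$ exactly as before. I therefore do not anticipate a genuine obstacle: all of the new content is carried by Theorem~\ref{thm:bartlett-margin}, and the scale-invariance step is verbatim the one already used.
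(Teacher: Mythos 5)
Your proposal is correct and is exactly the argument the paper intends: it states that Theorem~\ref{thm:sin-margin} follows by repeating the proof of Theorem~\ref{thm:sin-bound} with Theorem~\ref{thm:bartlett-square} replaced by Theorem~\ref{thm:bartlett-margin}. Your additional observation that the infimum commutes with the increasing map $s \mapsto s^{1+\a}$ is the only detail the paper leaves implicit, and you handle it correctly.
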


\begin{remark}
  An interesting aspect of the bounds in Theorem \ref{thm:sin-margin} and Theorem \ref{thm:sin-bound} is that they are \emph{never weaker} than the corresponding bounds of \citet{doi:10.1198/016214505000000907}, which relate classification risk to the excess square loss, on which they are based. As such, since $\norm{f^*}_{2,\bb P}$ is a problem-invariant constant, procedures that are tailored to minimization of $\theta_{2,\bb{P}}(f,f^*)$ will yield stronger bounds than those based only on control of the excess mean squared error. 
\end{remark}

\begin{remark}
For the rotationally invariant case, we show in the appendix that excess classification error is given precisely by 
  \[\frac{1}{\pi}\left(\int_{0}^\theta \sin t\, dt \right)\bb{E}|\bk{X}{\beta^*}|.\] 
We explain how this implies a convergence rate of $\frac{1}{n}$, whereas the standard bound by \citet{doi:10.1198/016214505000000907} only guarantees a rate of $\frac{1}{\sqrt n}$. Therefore, we see that rotation invariant linear classification produces fast rates, even without imposition of a margin condition. This demonstrates how an angle-based analysis of learning procedures can improve convergence guarantees from traditional bounds.  
\end{remark}


\section{Relationship between surrogate loss minimization and $\theta$}

\subsection{Defining classification calibration}
Thus far, we have related excess classification risk to the angle $\theta $ between a predictor $\tilde \beta$ and the optimal $\beta^*$, showing that the excess risk is guaranteed to be small when $\theta$ is small. Now we turn our attention to how $\theta$ is actually determined. In particular, we consider procedures that minimize a surrogate loss function $\phi$ over a set $S\subset \bb{R}^d$ of linear predictors and present guarantees on their maximum associated values of $\theta$. 

Procedures that are guaranteed to achieve $\sin(\theta) = 0$ in the population are of particular interest, as they converge to the optimal classifier. We call these ``classification calibrated.'' In the case of minimizing surrogate risk over $S$, we define this special trait as follows.  

\begin{definition} 
A procedure that minimizes the $\phi$-risk $\bb E [\phi (Y, \langle \beta, X \rangle )]$ over $S$ is \emph{classification calibrated} if its constrained minimizer $\tilde \beta$ is also a global minimizer of the classification risk $\bb P [ \sign \langle \beta, X \rangle \neq Y]$.
\end{definition} 

Note that this definition does not require the procedure to identify the global minimizer of the $\phi$ risk. In fact, in our simulations we will provide examples of when the global minimizer of $\phi$-risk is not contained in $S$, but the constrained minimizer in $S$ yields the global minimum of classification risk nonetheless. 

In this section, we study classification calibration in settings with well-specified models that are regularized so that $S$ is a ball of positive radius $r$, i.e.,
\[S = \Set[v \in \bb R^d]{\norm v \le r}.\]

In the following lemma, we start by noting that this choice of $S$ contains a global minimizer of the classification risk. The question therefore becomes, when does minimizing the surrogate loss $\phi$ within $S$ identify this global minimizer?
\begin{lemma}
$S$ contains a global minimizer of the classification risk.
\end{lemma}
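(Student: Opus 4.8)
The plan is to exploit the scale invariance of the classification rule together with the assumption that the model is well specified. Recall that throughout this section $f^*(X) = \bb{E}[Y\mid X] = \bk{\beta^*}{X}$ for some $\beta^* \in \bb{R}^d$. First I would invoke the standard Bayes-classifier identity: for $0$--$1$ loss on labels in $\{-1,1\}$, and any measurable classifier $h$, $\bb{P}(h(X)\ne Y) = \bb{E}_X\!\big[\bb{P}(h(X)\ne Y\mid X)\big]$, which is minimized pointwise by taking $h(x)$ to agree with the majority label $\sign\bb{E}[Y\mid X=x] = \sign\bk{\beta^*}{x}$. Hence $\beta^*$ attains the global minimum of $\beta \mapsto \bb{P}(\sign\bk{\beta}{X}\ne Y)$, not merely among linear predictors but among all measurable classifiers; in particular the infimum of the classification risk over $\bb{R}^d$ is attained at $\beta^*$.

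Next I would use that the classification rule $x \mapsto \sign\bk{\beta}{x}$, and therefore the classification risk $\bb{P}(\sign\bk{\beta}{X}\ne Y)$, is unchanged when $\beta$ is replaced by $t\beta$ for any $t>0$. If $\beta^* = 0$ then $0 \in S$ is already a global minimizer and we are done. Otherwise, setting $t = r/\norm{\beta^*}_2 > 0$ gives $\norm{t\beta^*}_2 = r$, so $t\beta^* \in S$, while $\bb{P}(\sign\bk{t\beta^*}{X}\ne Y) = \bb{P}(\sign\bk{\beta^*}{X}\ne Y)$ is the global minimum. Thus $t\beta^* \in S$ is a global minimizer of the classification risk, as claimed.

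The one point requiring care is the attainment claim in the first step. Since $\beta \mapsto \sign\bk{\beta}{X}$ need not be continuous, one cannot deduce attainment merely from compactness of the unit sphere of directions; what makes the argument go through is precisely well specification, which supplies an explicit minimizer $\beta^*$. The accompanying subtlety is the tie-breaking on $\{x : \bk{\beta^*}{x} = 0\}$: there $f^*(x) = 0$, so $\bb{P}(Y=1\mid X=x) = \tfrac12$ and the value of $\sign 0$ is irrelevant to the risk, which is why $\sign\bk{\beta^*}{\cdot}$ is Bayes-optimal under any convention for $\sign(0)$. I expect verifying this pointwise optimality (and the degenerate case $\beta^* = 0$) to be the only non-routine part; the rescaling step is immediate from scale invariance.
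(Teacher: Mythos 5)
Your proof is correct and follows essentially the same route as the paper: the classification rule is invariant to rescaling $\beta$, and since $S$ is a ball of positive radius, a rescaled copy $t\beta^*$ of a global minimizer lies in $S$. The only difference is that you additionally verify that $\beta^*$ is indeed a global minimizer via the Bayes-classifier argument under well specification (including the $\sign(0)$ tie-breaking and the $\beta^*=0$ case), a detail the paper's proof simply takes as given.
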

\begin{proof}
Since $\mathrm{sign}\bk{\beta}{X} = \mathrm{sign}\bk{c \beta}{X}$ for any $c >0$, recall that the classifications associated with any given $\beta$ are invariant to rescaling $\beta$. This is illustrated in Figure \ref{fig:theory_diag_1}. Since $S$ contains a neighborhood of the origin, it follows that it is guaranteed to contain a rescaled $\tilde \beta = c \beta^* \in S$ of some global minimizer $\beta^*$ of the classification risk. 
\end{proof}

\begin{figure}
  \centering
\includegraphics[width=.6 \columnwidth]{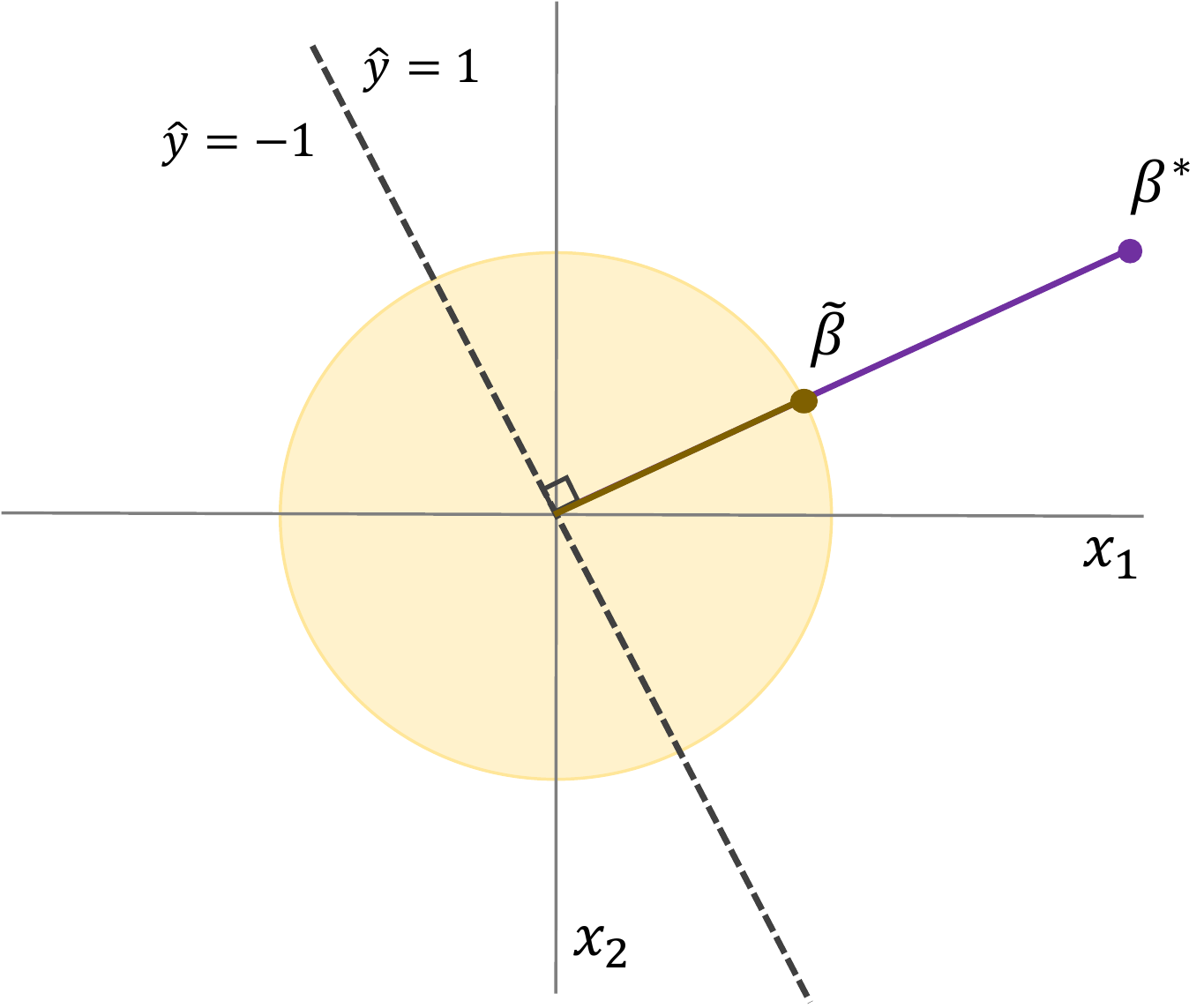}
\caption{\label{fig:theory_diag_1} Example of a classification calibrated predictor. The optimal regressor $\beta^*$ corresponds to the  same classifications as the minimizer $\tilde \beta$ in the orange ball depicting $S$. Instances to the right of the dashed line are classified as positive and those to the left of the dashed line are classified as negative. } 
\end{figure}

\subsection{Studying $\theta$ under Square Loss Minimization}
In this section, we will investigate the case where $\phi$ is the square loss $\phi(Y, f(X)) = (Y-f(X))^2$. We consider de-meaned features $X$ that may or may not be correlated and then  characterize the population square loss in terms of their covariance matrix $\Sigma = \bb{E}[XX^\top]$. Then, we will bound the angle between the constrained population minimizer $\tilde \beta$ and the unconstrained population minimizer $\beta^*$ in terms of $\Sigma$, showing that when features are uncorrelated, the angle is 0. Finally, we will investigate whether the excess misclassification risk can be controlled in terms of $\beta^*$, $\tilde \beta$, and $\Sigma$ alone. 

\subsubsection{Characterizing the population loss} We will show that the population loss associated with a linear predictor $\beta$ can be expressed in terms of $\Sigma$. We begin with the following lemma as an intermediate step to computing the mean square loss.

\begin{lemma}\label{lem:quad-form-sigma}
  If $\bb{E}[XX^\top]=\Sigma$ then $\bb{E}\bk{X}{v}^2=v^\top\Sigma v$.
\end{lemma}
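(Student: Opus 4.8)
The plan is to expand the inner product as a product of scalars, rewrite it as a quadratic form in the rank-one matrix $XX^\top$, and then pull the expectation through by linearity. First I would note that $\langle X, v\rangle = v^\top X$ is a scalar, hence equal to its own transpose, so that
\[
\langle X, v\rangle^2 = (v^\top X)(v^\top X) = (v^\top X)(X^\top v) = v^\top (X X^\top) v,
\]
where the last equality is just associativity of matrix multiplication.

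Next I would take expectations on both sides. Because $v$ is a fixed, non-random vector, the map $M \mapsto v^\top M v$ is a linear functional of the matrix $M$, so it commutes with the expectation: $\mathbb{E}\bigl[v^\top (XX^\top) v\bigr] = v^\top \mathbb{E}[XX^\top]\, v = v^\top \Sigma v$. Combining this with the previous display gives $\mathbb{E}\langle X, v\rangle^2 = v^\top \Sigma v$, which is the claim.

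There is essentially no obstacle here; the only point that merits a word is the interchange of expectation and the linear functional, which is licensed precisely because the hypothesis $\mathbb{E}[XX^\top] = \Sigma$ presumes the entries of $XX^\top$ are integrable. If one prefers to avoid matrix manipulations entirely, the same conclusion follows coordinatewise: $\langle X, v\rangle^2 = \sum_{i,j} v_i v_j X_i X_j$, and since $\mathbb{E}[X_i X_j] = \Sigma_{ij}$ by definition of $\Sigma$, linearity of expectation over the finite sum yields $\mathbb{E}\langle X, v\rangle^2 = \sum_{i,j} v_i v_j \Sigma_{ij} = v^\top \Sigma v$.
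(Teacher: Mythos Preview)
Your proof is correct and essentially matches the paper's: the paper expands $\langle X,v\rangle^2$ coordinatewise as $\sum_{i,j} X_i X_j v_i v_j = v^\top X X^\top v$ and then takes expectations, which is precisely your second (coordinatewise) argument. Your first derivation via $(v^\top X)(X^\top v)$ and associativity is a slightly cleaner variant of the same idea, and your remark on the integrability justifying the interchange is a nice touch the paper leaves implicit.
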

\textit{Proof in appendix.}

This result makes it possible for us to express the mean square loss associated with a linear predictor $\beta$ in terms of $\Sigma$ and $\beta^*$, the orthogonal projection of $Y$ onto the span of the features. We will see then that choosing $\beta$ to minimize the mean square loss corresponds to minimizing the expression $ (\beta^* - \beta)^\top \Sigma (\beta^* - \beta)$.
\begin{lemma}\label{lem:pop-loss}
Let $\beta^*$ be the orthogonal projection of $Y$ onto the span of the features, $\Set[\bks{\gamma}{X}]{\gamma \in \bb{R}^d}$ in $L^2(\bb P)$.  
  Then 
  \begin{align*}
    \bb{E}(Y-\bks{\beta}{X})^2 
    &= \bb{E}(Y-\bks{\beta^*}{X})^2 + \bb{E}(\bks{\beta^*}{X}-\bks{\beta}{X})^2 \\
    &= C + (\beta^* - \beta)^\top \Sigma (\beta^* - \beta),
  \end{align*}
   where $C$ is a constant independent of $\beta$.
\end{lemma}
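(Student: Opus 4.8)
The plan is to recognize the first equality as the Pythagorean identity for the orthogonal projection onto the span of the features in $L^2(\bb P)$, and then to obtain the second equality by a direct application of Lemma \ref{lem:quad-form-sigma}.

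First I would fix $\beta \in \bb R^d$ and write
\[ Y - \bks{\beta}{X} = \bigl(Y - \bks{\beta^*}{X}\bigr) + \bigl(\bks{\beta^*}{X} - \bks{\beta}{X}\bigr). \]
Squaring inside the expectation produces the three terms $\bb{E}(Y - \bks{\beta^*}{X})^2$, $\bb{E}(\bks{\beta^*}{X} - \bks{\beta}{X})^2$, and twice the cross term $\bb{E}\bigl[(Y - \bks{\beta^*}{X})(\bks{\beta^*}{X} - \bks{\beta}{X})\bigr]$. The key step is to argue that this cross term vanishes: the difference $\bks{\beta^*}{X} - \bks{\beta}{X} = \bks{\beta^* - \beta}{X}$ lies in the linear subspace $V = \Set[\bks{\gamma}{X}]{\gamma \in \bb R^d}$ of $L^2(\bb P)$, and since $\beta^*$ is by definition the orthogonal projection of $Y$ onto $V$, the residual $Y - \bks{\beta^*}{X}$ is orthogonal in the $L^2(\bb P)$ inner product to every element of $V$, in particular to $\bks{\beta^* - \beta}{X}$. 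This gives the first displayed equality. (The subspace $V$ is finite-dimensional, hence closed, so the projection is well defined; I assume throughout that $Y$ and the coordinates of $X$ are square-integrable, which is what makes these expectations finite.)

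For the second equality I would set $C = \bb{E}(Y - \bks{\beta^*}{X})^2$, which does not depend on $\beta$, and apply Lemma \ref{lem:quad-form-sigma} with $v = \beta^* - \beta$ to rewrite $\bb{E}\bks{\beta^* - \beta}{X}^2 = (\beta^* - \beta)^\top \Sigma (\beta^* - \beta)$. Substituting both pieces into the first equality yields the claim. I do not expect any genuine obstacle here: the only point requiring care is the correct invocation of the defining orthogonality property of the $L^2(\bb P)$ projection when showing the cross term vanishes, and the remainder is routine bookkeeping together with the already-established Lemma \ref{lem:quad-form-sigma}.
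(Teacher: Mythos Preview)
Your proposal is correct and follows essentially the same approach as the paper: the first equality is the Pythagorean theorem in $L^2(\bb P)$ for the orthogonal projection onto the span of the features, and the second comes from applying Lemma~\ref{lem:quad-form-sigma} with $v = \beta^* - \beta$. The paper's proof is terser, but your explicit verification that the cross term vanishes is exactly what underlies its appeal to the Pythagorean theorem.
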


\textit{Proof in appendix.}

These lemmas have particularly useful implications in cases where the features are uncorrelated and standardized, that is, when $X$ is isotropic according to the following definition. 
\begin{definition}
  A random vector $X \in \bb{R}^d$ is called \emph{isotropic} if $\bb{E}[XX^\top] = I$.
\end{definition}
When this holds, then the following corollary proves that minimizing the mean square loss in the population corresponds to choosing $\beta$ to minimize its distance from $\beta^*$.
\begin{corollary}
 The $\beta$ that minimizes $ \bb{E}(Y-\bks{\beta}{X})^2$ also minimizes $\norm{\beta^* - \beta}_2^2$. 
 \label{cor-beta-min-distance}
  \end{corollary}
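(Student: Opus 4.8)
The plan is to reduce the corollary directly to Lemma~\ref{lem:pop-loss} by specializing to the isotropic case. First I would invoke Lemma~\ref{lem:pop-loss} to write the population square loss associated with a linear predictor $\beta$ as
\[
\bb{E}(Y - \bks{\beta}{X})^2 = C + (\beta^* - \beta)^\top \Sigma (\beta^* - \beta),
\]
where $C$ does not depend on $\beta$ and $\beta^*$ is the $L^2(\bb P)$-projection of $Y$ onto the span of the features. Since $X$ is isotropic by hypothesis, $\Sigma = \bb{E}[XX^\top] = I$, so the quadratic form collapses to $(\beta^* - \beta)^\top I (\beta^* - \beta) = \norm{\beta^* - \beta}_2^2$.

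Second, I would observe that $C$ is a fixed additive constant, so for any two predictors $\beta_1, \beta_2$ we have $\bb{E}(Y - \bks{\beta_1}{X})^2 \le \bb{E}(Y - \bks{\beta_2}{X})^2$ if and only if $\norm{\beta^* - \beta_1}_2^2 \le \norm{\beta^* - \beta_2}_2^2$. In particular the set of minimizers of $\beta \mapsto \bb{E}(Y - \bks{\beta}{X})^2$ coincides exactly with the set of minimizers of $\beta \mapsto \norm{\beta^* - \beta}_2^2$ (both are the singleton $\{\beta^*\}$, though one need not even say that). Hence any $\beta$ minimizing the mean square loss also minimizes $\norm{\beta^* - \beta}_2^2$, which is the claim.

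Since the whole argument is just substitution of $\Sigma = I$ into an already-established identity plus the trivial fact that adding a constant does not change the argmin, there is essentially no obstacle; the only thing to be careful about is that Lemma~\ref{lem:pop-loss} as stated requires $\beta^*$ to be the orthogonal projection of $Y$ onto $\Set[\bks{\gamma}{X}]{\gamma \in \bb{R}^d}$, so I would note that this is exactly the object referred to in the corollary's statement and that no additional well-specification assumption is needed for the identity to hold.
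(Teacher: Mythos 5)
Your proof is correct and follows essentially the same route as the paper: it applies Lemma \ref{lem:pop-loss}, substitutes $\Sigma = I$ from the isotropic hypothesis, and notes that the additive constant $C$ does not affect the minimizer. Your version simply spells out more explicitly the argmin-equivalence and the role of the isotropy assumption, which the paper leaves implicit from the surrounding context.
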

  
  \textit{Proof in appendix}
  
\subsubsection{Bounding the angle}

We now shift our attention to bounding the angle between a linear predictor $\beta$ and the best linear predictor $\beta^*$.

We see that in the isotropic case, minimizing square loss recovers a $\tilde \beta$ whose angle with $\beta^*$ is 0. That is, minimizing square loss gives the optimal classifier. 

\begin{proposition} \label{prop:sq-loss-isotropic}
  If $X$ is isotropic, then the minimizer $\tilde \beta \in S$ of the square loss $\bb{E}(Y-\bks{\beta}{X})^2$ satisfies 
  \[ \sin \theta (\tilde \beta, \beta^*) = 0 \]
\end{proposition}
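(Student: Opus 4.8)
The plan is to collapse the constrained square-loss minimization to a Euclidean projection onto the ball $S$ and then to observe that such a projection always points in the same direction as the point being projected. First I would invoke Lemma~\ref{lem:pop-loss}, which expresses the population loss as
\[
\bb{E}(Y - \bks{\beta}{X})^2 = C + (\beta^* - \beta)^\top \Sigma (\beta^* - \beta),
\]
and then use the isotropy hypothesis $\Sigma = I$ to rewrite the right-hand side as $C + \norm{\beta^* - \beta}_2^2$. Consequently, minimizing the square loss over $S$ is exactly minimizing $\norm{\beta^* - \beta}_2^2$ over $S$ (this is the content of Corollary~\ref{cor-beta-min-distance}), i.e., $\tilde\beta$ is the Euclidean metric projection of $\beta^*$ onto the ball $S = \Set[v]{\norm v \le r}$. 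Since $S$ is closed and convex and the objective is strictly convex, $\tilde\beta$ is unique and well defined; isotropy also makes $\Sigma = I$ invertible, so $\beta^*$ itself is the unique unconstrained population minimizer.

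Next I would split into two cases according to whether $\beta^*$ lies in $S$. If $\norm{\beta^*}_2 \le r$, then $\tilde\beta = \beta^*$ and the angle is trivially $0$. If $\norm{\beta^*}_2 > r$, the projection onto the centered ball is the familiar $\tilde\beta = r\,\beta^*/\norm{\beta^*}_2$, which is a strictly positive scalar multiple of $\beta^*$. In either case $\tilde\beta = t\beta^*$ for some $t > 0$, so the associated linear predictors satisfy $\bks{\tilde\beta}{X} = t\,\bks{\beta^*}{X}$ pointwise.

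Finally I would feed this into the definition~\eqref{eq:general-sin} of $\theta_{2,\bb P}$. Writing $u = \bks{\tilde\beta}{X}$ and $v = \bks{\beta^*}{X}$, the degenerate case $\beta^* = 0$ being immediate, we have $\bb{E}[uv] = t\,\bb{E}[\bks{\beta^*}{X}^2] > 0$, and choosing $t' = 1/(t\,\norm{v}_{2,\bb P}) > 0$ in the infimum gives $t'u = v/\norm{v}_{2,\bb P}$, so the infimum defining $\sin\theta_{2,\bb P}(\tilde\beta,\beta^*)$ equals $0$. (Alternatively, one could appeal directly to Lemma~\ref{lemma-sin-theta}: $\tilde\beta$ and $\beta^*$ span a one-dimensional subspace and are like-oriented, so the angle between them is $0$.)

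I do not anticipate a genuine obstacle; the only steps needing a little care are the identification of the constrained square-loss minimizer with a metric projection — which hinges on isotropy collapsing the quadratic form $(\beta^*-\beta)^\top\Sigma(\beta^*-\beta)$ to $\norm{\beta^*-\beta}_2^2$ — and the elementary observation that projecting a point onto a ball centered at the origin returns a nonnegative rescaling of that point. Everything else is bookkeeping against the definition of $\theta_{2,\bb P}$.
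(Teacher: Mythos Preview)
Your argument is correct. The paper, however, does not argue directly: it deduces Proposition~\ref{prop:sq-loss-isotropic} as an immediate corollary of the general Theorem~\ref{thm:sq-loss-general}, whose proof writes the KKT stationarity condition $\tilde\beta = (\Sigma + \lambda)^{-1}\Sigma\beta^*$ and then bounds $\sin\theta(\tilde\beta,\beta^*)$ by $\inf_{a\ge 0}\norm{a\Sigma - I}_{\mathrm{op}}$; specializing to $\Sigma = I$ and taking $a=1$ gives the bound $0$. Your route is more elementary and self-contained for the isotropic case: you bypass the Lagrangian analysis and the operator-norm machinery entirely by recognizing the constrained problem as a Euclidean projection onto the ball and reading off the explicit formula $\tilde\beta = \min\{1, r/\norm{\beta^*}_2\}\,\beta^*$. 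The paper's route, on the other hand, buys the general $\Sigma$ bound in the same breath, so Proposition~\ref{prop:sq-loss-isotropic} comes essentially for free once Theorem~\ref{thm:sq-loss-general} is in hand. Both are valid; yours is the cleaner standalone proof of the proposition, while the paper's is the natural byproduct of proving the stronger statement.
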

\noindent This follows easily from the following, more general result given a feature covariance matrix $\Sigma$.
\begin{theorem} \label{thm:sq-loss-general}
  In general, the minimizer  $\tilde \beta \in S$ of the square loss satisfies
  \[\sin \theta (\tilde \beta, \beta^*) 
  \le \inf_{a \ge 0} \norm{a\Sigma - I}_{\mathrm{op}}\]
\end{theorem}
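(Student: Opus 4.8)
The plan is to use Lemma~\ref{lem:pop-loss} to turn this into a statement about an explicit constrained least-squares solution, and then to exhibit a single well-chosen rescaling of that solution which is provably close in direction to $\beta^*$. By Lemma~\ref{lem:pop-loss}, minimizing the population square loss over $S$ is equivalent to minimizing the quadratic $(\beta-\beta^*)^\top\Sigma(\beta-\beta^*)$ subject to $\norm{\beta}_2\le r$. Before the main argument I would clear away two degenerate cases. If $\Sigma$ is singular, pick a unit vector $v$ in its kernel; then $(a\Sigma-I)v=-v$, so $\norm{a\Sigma-I}_{\mathrm{op}}\ge 1$ for every $a\ge 0$ and the asserted bound is vacuous since $\sin\theta\le 1$. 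If instead $\norm{\beta^*}_2\le r$, the constraint is inactive, $\tilde\beta=\beta^*$, and $\sin\theta(\tilde\beta,\beta^*)=0$. So assume henceforth that $\Sigma$ is positive definite and $\norm{\beta^*}_2>r$.

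Under these assumptions the objective is strictly convex and Slater's condition holds, so the KKT conditions are necessary and sufficient and give $\tilde\beta=(\Sigma+\lambda I)^{-1}\Sigma\beta^*$ for some multiplier $\lambda>0$, with $\tilde\beta$ on the boundary $\norm{\tilde\beta}_2=r$ — the familiar ridge-type solution. Since $(\Sigma+\lambda I)^{-1}\Sigma$ is positive definite, $\bk{\tilde\beta}{\beta^*}>0$, so Lemma~\ref{lemma-sin-theta} applies and $\sin\theta(\tilde\beta,\beta^*)=\inf_{t\ge 0}\norm{t\tilde\beta-\beta^*/\norm{\beta^*}_2}_2$. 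The crux is that, for any $a\ge 0$, choosing the scalar $t=(1+\lambda a)/\norm{\beta^*}_2$ yields the identity
\[t\tilde\beta-\frac{\beta^*}{\norm{\beta^*}_2}=\left((1+\lambda a)(\Sigma+\lambda I)^{-1}\Sigma-I\right)\frac{\beta^*}{\norm{\beta^*}_2}=\lambda(\Sigma+\lambda I)^{-1}(a\Sigma-I)\frac{\beta^*}{\norm{\beta^*}_2},\]
which one verifies by multiplying both sides by $\Sigma+\lambda I$ and simplifying. Passing to operator norms, and using $\lambda\norm{(\Sigma+\lambda I)^{-1}}_{\mathrm{op}}=\lambda/(\lambda_{\min}(\Sigma)+\lambda)\le 1$, gives $\sin\theta(\tilde\beta,\beta^*)\le\norm{a\Sigma-I}_{\mathrm{op}}$; taking the infimum over $a\ge 0$ is the claim. (Proposition~\ref{prop:sq-loss-isotropic} falls out by setting $\Sigma=I$ and $a=1$.)

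I do not anticipate a real obstacle. The one step that needs care is the derivation of the ridge-type formula for $\tilde\beta$ together with the bookkeeping of the interior-versus-boundary cases; everything after that is the choice $t=(1+\lambda a)/\norm{\beta^*}_2$, which is reverse-engineered so that $t\tilde\beta-\beta^*/\norm{\beta^*}_2$ factors through $a\Sigma-I$, and whose verification is routine matrix algebra. A secondary point worth double-checking is that $\sin\theta$ here denotes the Euclidean angle between the vectors $\tilde\beta,\beta^*$ (not the $L^2(\bb P)$ angle between $\bks{\tilde\beta}{X}$ and $\bks{\beta^*}{X}$), so that Lemma~\ref{lemma-sin-theta} is indeed the right tool.
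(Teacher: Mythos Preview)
Your proposal is correct and follows essentially the same route as the paper: reduce via Lemma~\ref{lem:pop-loss} to the ridge-type KKT solution $\tilde\beta=(\Sigma+\lambda I)^{-1}\Sigma\beta^*$, then pick the rescaling $t=(1+\lambda a)/\norm{\beta^*}_2$ and bound the resulting matrix by $\norm{a\Sigma-I}_{\mathrm{op}}$. Your factorization $t\tilde\beta-\beta^*/\norm{\beta^*}_2=\lambda(\Sigma+\lambda I)^{-1}(a\Sigma-I)\,\beta^*/\norm{\beta^*}_2$ together with $\norm{\lambda(\Sigma+\lambda I)^{-1}}_{\mathrm{op}}\le 1$ is a bit cleaner than the paper's argument, which first does the case $a=1$ via an eigenvalue-and-monotonicity computation and then rescales $\Sigma\mapsto a\Sigma$; you also explicitly handle the degenerate cases (singular $\Sigma$, inactive constraint) that the paper leaves implicit.
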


\textit{Proof in appendix.}

\subsection{Studying $\theta$ under General Loss Minimization}

When we consider minimizing a general surrogate loss function $\phi$ in a setting where the law of $X$ is rotation invariant, then we can guarantee convergence to the optimal classifier. 

\begin{proposition} Suppose that the following conditions hold. \label{prop:general-loss}
  \begin{enumerate}[label=(\roman*)]
    \item The law of $X$ is rotation invariant. \label{it:rotinv}
    \item $\bb{P}(Y=1|X) = \eta(\bks{X}{\beta^*})$ for some $\beta^* \in \bb{R}^d$. \label{it:suff}
    \item The loss function $\phi(Y,f)$ is convex in $f$. \label{it:conv}
  \end{enumerate}
  Then the constrained minimizer $\tilde \beta$ of $\beta \mapsto \bb{E}\phi(Y,\bks{\beta}{X})$ subject to $\norm{\beta} \le r$ is unique and satisfies $\tilde \beta = c\beta^*$ from some $c \in \bb{R}$.
\end{proposition}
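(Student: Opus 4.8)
The plan is to exploit a symmetry of the population objective $L(\beta) := \bb E\,\phi(Y,\bks{\beta}{X})$: conditions \ref{it:rotinv} and \ref{it:suff} together make $L$ invariant under the group $H$ of orthogonal maps that fix $\beta^*$, and the common fixed subspace of $H$ in $\bb R^d$ is precisely the line $\bb R\beta^*$; so once the minimizer of $L$ over $S$ is known to be \emph{unique}, it must lie on that line.

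I would first record two structural facts. \textbf{Nondegeneracy:} a rotation-invariant $X$ that is not a.s.\ zero satisfies $\bb E[XX^\top]=\sigma^2 I$ with $\sigma^2>0$ — the only matrices commuting with every rotation are scalar multiples of $I$, and $\sigma^2=0$ would force $X\equiv 0$ — hence $\beta\mapsto\bks{\beta}{X}$ is injective into $L^2(\bb P)$. \textbf{Convexity:} $L$ is convex since $\phi(Y,\cdot)$ is convex (condition \ref{it:conv}) and $\beta\mapsto\bks{\beta}{X}$ is linear; if in addition $\phi(Y,\cdot)$ is strictly convex (as it is for the square, logistic, and exponential losses), nondegeneracy upgrades this to \emph{strict} convexity of $L$. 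Since $S=\set{\norm{\beta}\le r}$ is convex and compact and $L$ is continuous and finite on $S$, a minimizer $\tilde\beta$ exists, and by strict convexity it is unique.

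Next, the symmetry step. Let $H=\set{R\in O(d):R\beta^*=\beta^*}$, a group. For $R\in H$ I would show $(RX,Y)\overset{d}{=}(X,Y)$: we have $RX\overset{d}{=}X$ by \ref{it:rotinv}, and $\bks{X}{\beta^*}=\bks{RX}{R\beta^*}=\bks{RX}{\beta^*}$, so for any bounded measurable $h$, $\bb E[h(RX)\mathbbm 1\{Y=1\}]=\bb E[h(RX)\,\eta(\bks{RX}{\beta^*})]=\bb E[h(X)\,\eta(\bks{X}{\beta^*})]=\bb E[h(X)\mathbbm 1\{Y=1\}]$ (the first step uses \ref{it:suff}, the middle step \ref{it:rotinv}), and likewise on $\{Y=-1\}$; taking $h$ an indicator identifies the joint laws. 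Since $R^\top=R^{-1}\in H$, it follows that $L(R\beta)=\bb E\,\phi(Y,\bks{R\beta}{X})=\bb E\,\phi(Y,\bks{\beta}{R^\top X})=L(\beta)$ for every $R\in H$ and every $\beta$. Applying this to the unique minimizer $\tilde\beta$: $R\tilde\beta\in S$ since orthogonal maps are isometries, and $L(R\tilde\beta)=L(\tilde\beta)$, so $R\tilde\beta$ is again a minimizer, and uniqueness forces $R\tilde\beta=\tilde\beta$ for all $R\in H$. In an orthonormal basis whose first vector is $\beta^*/\norm{\beta^*}$, $H$ consists of the block matrices $\mathrm{diag}(1,Q)$ with $Q\in O(d-1)$, whose common fixed subspace is exactly $\bb R\beta^*$; therefore $\tilde\beta=c\beta^*$ for some $c\in\bb R$. (When $\beta^*=0$, $L$ is radial and convex, so $\tilde\beta=0$ and the claim is trivial.)

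The \emph{uniqueness} assertion is the step I expect to be the real obstacle: for a loss merely convex — but not strictly convex — in $f$, the minimizer over $S$ genuinely need not be unique (take $\phi\equiv 0$), so some strengthening must be invoked, namely strict convexity of $\phi(Y,\cdot)$, which together with the nondegeneracy of $X$ is exactly what yields strict convexity of $L$; I would either assume this outright or restrict attention to the canonical losses. Without it one can still extract the weaker ``direction'' conclusion cheaply: Haar-averaging any minimizer over $H$ produces $\int_H R\tilde\beta\,dR=\Pi_{\bb R\beta^*}\tilde\beta$, which by Jensen remains a minimizer and by norm-contraction still lies in $S$, so at least one minimizer is a scalar multiple of $\beta^*$.
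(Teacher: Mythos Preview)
Your argument is essentially the paper's: both reduce the objective to a function of $(\bks{\beta^*}{X},\bks{\beta}{X})$, observe it is invariant under the stabilizer $G\subset O(d)$ of $\beta^*$, and use uniqueness of the constrained minimizer to force $g\tilde\beta=\tilde\beta$ for all $g\in G$, placing $\tilde\beta$ on the invariant line $\bb R\beta^*$. The only cosmetic difference is how the symmetry is established --- the paper conditions on $Y$ to write $L(\beta)=\bb E\,F_{\eta,\phi}(\bks{\beta^*}{X},\bks{\beta}{X})$ and then substitutes $X\mapsto g^\top X$, whereas you argue directly that $(RX,Y)\overset{d}{=}(X,Y)$; these are equivalent.

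Your worry about uniqueness is well placed and, in fact, more careful than the paper itself. The paper's sole justification is the sentence ``the minimizer of a convex function over a strictly convex set is unique,'' invoked from condition \ref{it:conv} alone; but that claim is false in general --- your example $\phi\equiv 0$ already shows it, as does $L(\beta)=\beta_1^2$ on the closed unit ball. So the stated hypothesis of mere convexity does not deliver uniqueness, and your proposed strengthening to strict convexity of $\phi(Y,\cdot)$, combined with the nondegeneracy $\bb E[XX^\top]=\sigma^2 I$ to make $\beta\mapsto\bks{\beta}{X}$ injective in $L^2(\bb P)$, is exactly what is needed to make the argument rigorous. Your Haar-averaging fallback correctly recovers the weaker conclusion (existence of \emph{some} minimizer on $\bb R\beta^*$) without that extra assumption.
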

\textit{Proof in appendix.}
In the simulations in the following section, we present evidence that this result cannot be completely relaxed without further assumptions.




\section{Application}
\begin{figure}
  \centering
\includegraphics[width=\columnwidth]{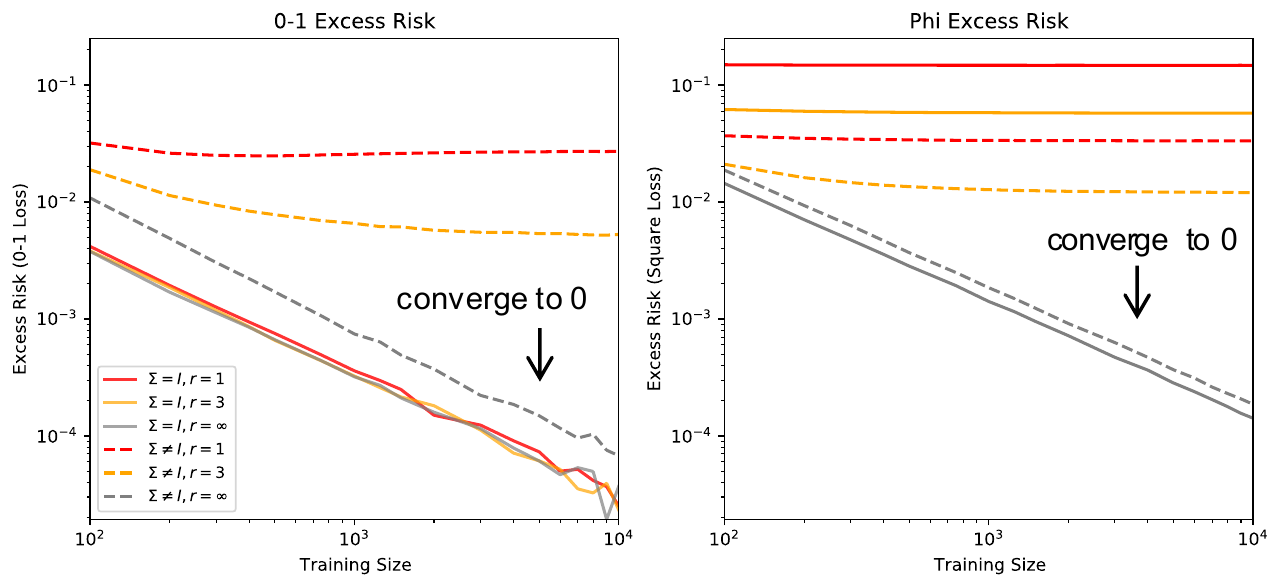}
\caption{\label{fig:sqloss_plot} Minimizing square loss when $p^*$ is linear in $X$. Gray lines correspond to correctly specified models without binding norm constraints, and they are therefore associated with excess risks that converge to zero in panels (a) and (b). In panel (a) we see that the misspecified models trained on uncorrelated features (solid red and solid orange lines) have excess 0-1 risk converging to zero despite the fact that panel (b) shows they do not recover a globally minimized $\phi$-risk. Meanwhile, when features are correlated (dashed lines), the misspecified models do not yield zero excess risk neither according to 0-1 loss nor square loss.} 
\end{figure}
In this section, we construct simulations that illustrate classification-calibration in practice. We also present evidence on future investigations to characterize when surrogate loss minimization can recover optimal classifiers. 

In these simulations we construct features $X$ that are either normally or uniformly distributed, and that may or may not be correlated. According to a fixed true $\beta^*$, these ultimately determine true underlying probabilities $p^*$ of a binary outcome $Y$. We then construct $Y$ according to binomial draws of $p^*$. This data-generating process defines primitives $(\beta^*, p^*)$ that are unobserved by a machine learner, as well as data $(X,Y)$ that are observed. 

The machine learner constructs models of the data-generating process to predict $Y$ from $X$. We suppose the learner passes training instances of $(X,Y)$ through a procedure that minimizes a convex loss function $\phi$, either square or logistic loss, and thus produces estimated regressors $f_\phi(\langle \tilde \beta, X \rangle)$ in a test set. To measure the success of their procedure, we compute excess $\phi$-risk, $\bb{E}[\phi (Yf_\phi (X)] - \bb{E}[\phi (Yf^*(X)] $ as well as excess 0-1 risk,  $\bb{P}[ \sign (\langle \tilde \beta, X \rangle) \neq Y] - \bb{P}[ \sign (\langle  \beta^*, X \rangle) \neq Y] $.

We consider cases where $p^*$ is linear or nonlinear in $X$. This allows us to explore two kinds of misspecification: one where the models are only misspecified through a norm restriction, and the other where the estimating model itself is structurally different from the data-generating process.

\subsection{$p^*$ is linear in features}

We first consider the linear case $p^* = \frac{1}{2} + \beta^* X$ for which minimizing square loss produces a well-specified model. Recall that in Proposition \ref{prop:sq-loss-isotropic}, we showed that when $X$ is isotropic, then models minimizing square loss are classification calibrated. That is, they recover the optimal classifier even when their norm constraint $r$ prevents them from recovering the optimal regressor. Meanwhile, when features are not isotropic (Theorem \ref{thm:sq-loss-general}), then more restrictive choices of $r$ prevent the convergence of the classifications to the optimal. 

We demonstrate these results in Figure \ref{fig:sqloss_plot} where we plot excess 0-1 classification risk and excess $\phi$ risk. Features are distributed as $U(-\frac{1}{8}, \frac{1}{8})$ and they fix $p^*  = \frac{1}{2} + \langle  \beta^*, X \rangle $ with $\beta^* = (1,-3)$. Consider first when there is no binding norm restriction, $r = \infty$. The model is correctly specified regardless of whether features are correlated, and as the gray lines show, both 0-1 and $\phi$ excess risks converge to 0 as the training size grows. Meanwhile, when we impose a misspecified norm constraint on the models (red and orange lines), then $\phi$ excess risks no longer converge to 0. Yet 0-1 excess risk still does converge to 0 so long as the features are uncorrelated (solid red and orange lines), marking the cases where models are classification calibrated.   

We separately minimized logistic loss to model the same data generating process. While the functional form is misspecified in this case, we were surprised to see that a notion of Proposition \ref{prop:sq-loss-isotropic} still held. As seen in Figure \ref{fig:logloss_plot}, these models were again classification calibrated in the isotropic case. This suggests future exploration of a wider class of surrogate loss functions that yield optimal classifiers associated with linear predictors.

\begin{figure}[b]
  \centering
\includegraphics[width=.55 \columnwidth]{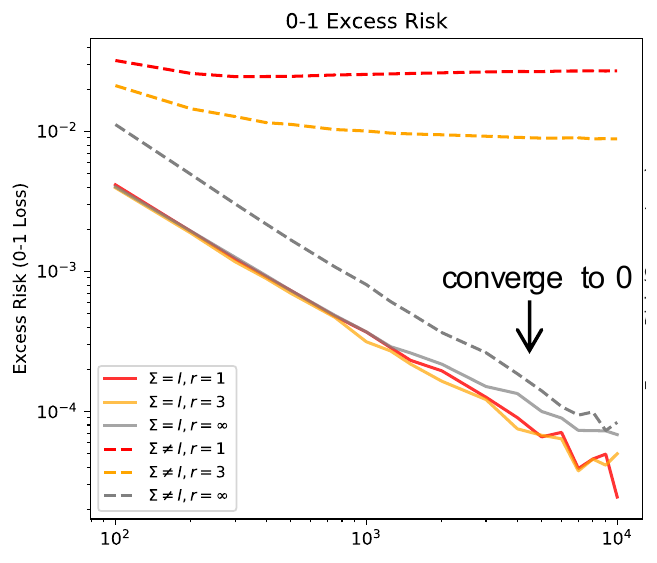}
\caption{\label{fig:logloss_plot}   Minimizing logistic loss when  $p^*$ is linear in $X$. Misspecified models in this case with a tight norm restriction (solid red and solid orange lines) are seen to be classification calibrated in the isotropic case, with excess 0-1 risk converging to zero. Meanwhile, tightening the norm restrictions leads to convergence toward non-zero values. This suggests that our results on square loss may be extended to other convex surrogate loss functions.}
\end{figure}

\subsection{$p^*$ is nonlinear in features}

\begin{figure}[t]
  \centering
\includegraphics[width=\columnwidth]{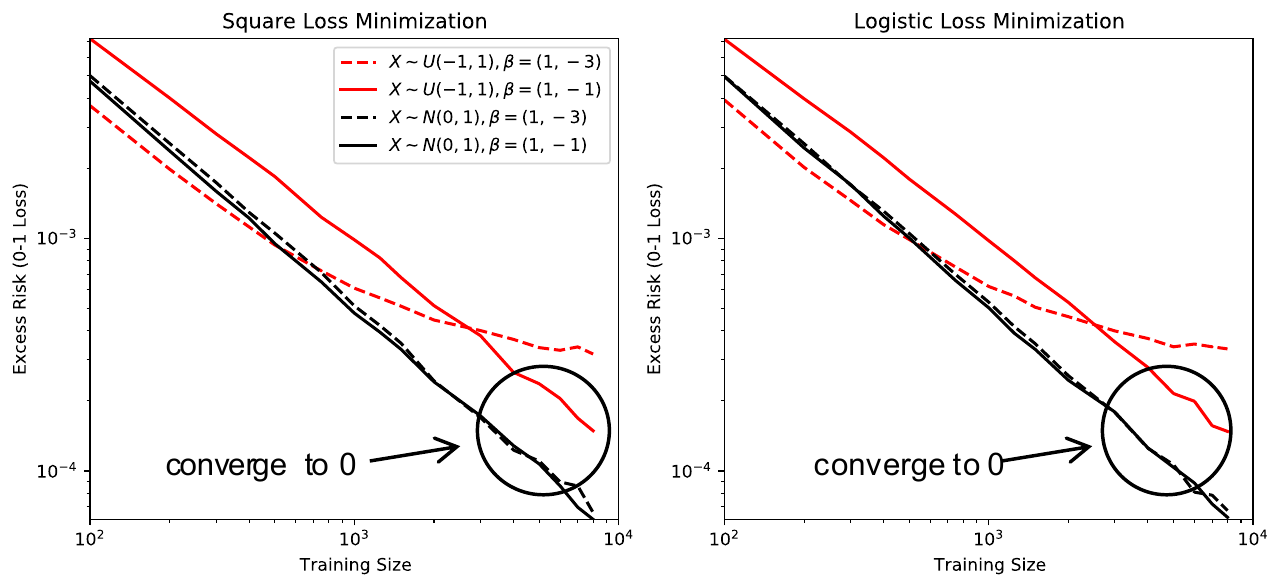}
\caption{\label{fig:rotinv_plot} Minimizing square and logistic loss when $p^*$ is a nonlinear transformation of $\langle \beta, X \rangle$. Adjusting the data-generating process is seen to affect convergence of excess 0-1 risk to zero, showing there is a limit to how much we can extend the result in Proposition \ref{prop:general-loss}. When features are uniformly distributed but $\beta$ is non-symmetric (dashed red line), models are not classification-calibrated regardless of whether $\phi$ is square or logistic loss. }
\end{figure}

We next explored weakening the assumption that $p^*$ is linear in $X$ to see whether we could extend the result in Proposition \ref{prop:general-loss}  to cases where the law of $X$ is not rotation invariant. We learned that this result cannot be generalized without further assumptions. 

In this new set of simulations, we adjusted the data-generating process by passing $\langle  \beta^*, X \rangle$ through the logistic CDF function to construct the true underlying probabilities $p^*$. The results are depicted in Figure \ref{fig:rotinv_plot}. We first considered specifications of $X$ satisfying rotation invariance: we constructed two features each independent and distributed as $N(0,1)$. For both symmetric and non-symmetric choices of $\beta^*$, minimizing square or logistic loss produced classification-calibrated models regardless of whether square or logistic loss was minimized (black lines in each panel), supporting the result in Proposition \ref{prop:general-loss}. However, when we instead constructed features that are independent but distributed as $U(-1,1)$, so that the law of $X$ is not rotation invariant, we saw that excess 0-1 risk does not necessarily converge to 0. This is depicted by the dashed red lines corresponding to $\beta^* = (1,-3)$. Therefore, to guarantee convergence to the optimal classifier when $p^*$ is not linear in $X$ and the law of $X$ is not rotation invariant, we understood that additional assumptions are required.  


\section{Conclusion}

In this paper, we used a geometric distinction between classification and regression problems to more precisely characterize how loss in one setting relates to loss in the other. Using the scale invariance of classification, we were able to improve the bounds used by theorists and practitioners to compare classification procedures against one another. We hope that this work will help inform decisions about which classification algorithms to deploy in practice, and that it may open the door for effective new algorithms that aim to directly predict the direction of the conditional expectation function rather than its location.

\newpage

\bibliographystyle{plainnat}
\bibliography{outline}

\newpage

\section{Appendix}

\subsection{Proof of Lemma \ref{lem:quad-form-sigma}}

\begin{proof}
  We can expand
  \begin{align*}
    \bk{X}{v}^2 
    &= \left(\sum_{i=1}^d X_iv_i \right)^2 \\
    &= \sum_{i=1}^d \sum_{j=1}^d X_iX_jv_iv_j \\
    &= v^\top X X^\top v.
  \end{align*}
  Now take expectations and use $\bb{E}[XX^\top] = \Sigma$.
\end{proof}

\subsection{Proof of Lemma \ref{lem:pop-loss}}


\begin{proof}
By definition of $\beta^*$ as the orthogonal projection of $Y$ onto the span of the features, the first equality follows from the Pythagorean theorem in $L^2(\bb P)$. Then, the second follows from appying Lemma \ref{lem:quad-form-sigma} to $\bb{E}(\bks{\beta^*}{X}-\bks{\beta}{X})^2  = \bb{E}(\bks{\beta^*-\beta}{X})^2 $.
\end{proof}

\subsection{Proof of Corollary \ref{cor-beta-min-distance}}

\begin{proof} 
Following Lemma \ref{lem:pop-loss}, the mean square loss is minimized when $ (\beta^* - \beta)^\top I (\beta^* - \beta) = \norm{\beta^* - \beta}_2^2$ is minimized.
\end{proof}

\subsection{Proof of Theorem \ref{prop:sq-loss-isotropic}}


\begin{proof}
  By Lemma \ref{lem:pop-loss}, we know that $\tilde \beta$ minimizes
  \[\beta \mapsto ( \beta - \beta^*)^\top\Sigma(\beta - \beta^*)\] subject to $\norms{ \beta} \le r$. This is a convex program; from the KKT conditions, we obtain that 
  \[\tilde \beta = (\Sigma + \lambda)^{-1}\Sigma \beta^*\] where $\lambda = \lambda(r,\Sigma,\beta^*) \ge 0$ is a Lagrange multiplier.
  \begin{proof}[Derivation]
    This is a feasible convex program as seen by taking $\beta = 0$, so the KKT conditions are necessary.
    The Lagrangian is
    \[\c L(\beta,\lambda) =  ( \beta - \beta^*)^\top\Sigma(\beta - \beta^*) + \lambda(\beta^\top\beta-r^2)\]
    The dual feasibility condition is $\lambda \ge 0$ and the stationarity condition is
    \begin{equation*}
      \frac{\partial \c L}{\partial \beta} = 0
      \iff 2\Sigma^\top(\beta - \beta^*) + 2\lambda\beta = 0
      \iff \beta = (\Sigma + \lambda)^{-1}\Sigma\beta^*
    \end{equation*}
    where we used that $\Sigma^\top = \Sigma$. This proves that the stated conditions must hold at the constrained minimizer $\tilde\beta$.
  \end{proof}
  \noindent We therefore wish to bound
  \begin{align*}
   & \inf_{t \in \bb{R}} \norm{t(\Sigma + \lambda)^{-1}\Sigma\beta^* - \frac{\beta^*}{\norms{\beta^*}_2}}_2.
   \intertext{Making the change of variable $t = t'/\norms{\beta^*}$ this is}
   = & \inf_{t' \in \bb{R}} \norm{t'(\Sigma + \lambda)^{-1}\Sigma\frac{\beta^*}{\norms{\beta^*}_2} - \frac{\beta^*}{\norms{\beta^*}_2}}_2\\
   \le & \inf_{t \in \bb{R}} \norm{t(\Sigma + \lambda)^{-1}\Sigma- I}_{\mathrm{op}}\\
   \le & \inf_{t \in \bb{R}} \sup_{\lambda \ge 0} \norm{t(\Sigma + \lambda)^{-1}\Sigma- I}_{\mathrm{op}}
   \intertext{Since the operator norm is equal to the maximum absolute eigenvalue, this is precisely}
  = & \inf_{t \in \bb{R}} \sup_{\lambda \ge 0} \max_{1 \le k \le d} \left|\frac{t\sigma_k}{\sigma_k + \lambda} - 1\right|.
  \intertext{Choosing $t = (1+\lambda)$ and interchanging $\max$ and $\sup$, this is}
  \le &  \max_{1 \le k \le d} \sup_{\lambda \ge 0} \left|\frac{(1+\lambda)\sigma_k}{\sigma_k + \lambda} - 1\right|.
  \intertext{It can be seen by differentiating that for each $\sigma_k$ the inner expresson is monotone increasing in $\lambda$ from $0$ to $|\sigma_k - 1|$. Therefore, the above expression evaluates to }
  = & \max_{1 \le k \le d} \left|\sigma_k - 1\right| \\
  = & \norm{\Sigma - I}_{\mathrm{op}}.
  \end{align*}
  Finally, this entire argument holds when $\Sigma$ is replaced by $a \Sigma$ for any $a > 0$. Optimizing over all $a$, taking the limit as $a \downarrow 0$ if need be, gives the announced result. 
\end{proof}

\subsection{Proof of Proposition \ref{prop:general-loss}}

\begin{proof}
  Firstly, note that since $Y \in \set{\pm 1}$ we may write
  \begin{align*}
    \bb{E}\phi(Y,\bks{\beta}{X}) &=
    \bb{E}\left[ \mathbbm{1}\{Y=1\} \phi(1,\bks{\beta}{X}) +(1-\mathbbm{1}\{Y=1\})\phi(-1,\bks{\beta}{X})\right].
    \intertext{Iterating expectations then yields}
    &= \bb{E}\left[ \bb{P}(Y=1|X) \phi(1,\bks{\beta}{X}) +(1-\bb{P}(Y=1|X))\phi(-1,\bks{\beta}{X})\right] \\
    &=  \bb{E}\left[  \eta(\bks{\beta^*}{X})\phi(1,\bks{\beta}{X}) +(1-\eta(\bks{\beta^*}{X}))\phi(-1,\bks{\beta}{X})\right] \\
    &\equiv \bb{E}F_{\eta,\phi}(\bks{\beta^*}{X},\bks{\beta}{X}),
    \intertext{by \ref{it:suff}. Now, let $g \in O_d(\bb R)$ be any rotation such that $g\beta^* = \beta^*$. Note that these rotations form a group, $G$, for which $\Sets[c\beta^*]{c \in \bb{R}}$ is the unique invariant subspace. Moreover, for any such rotation, $g^\top = \inv g$ is also in $G$. By \ref{it:rotinv} we may then write}
    &= \bb{E}F_{\eta,\phi}(\bks{\beta^*}{g^\top X},\bks{\beta}{g^\top X}) \\
    &= \bb{E}F_{\eta,\phi}(\bks{g \beta^*}{X},\bks{g\beta}{X}) \\
    &= \bb{E}F_{\eta,\phi}(\bks{\beta^*}{X},\bks{g\beta}{X}).
  \end{align*}
  Note also that $\norms{g \tilde \beta} = \norms{\tilde \beta}$, so the set of constrained minimizers must contain $\Sets[g\tilde \beta]{g \in G}$.
  However, the minimizer of a convex function over a strictly convex set is unique, so by \ref{it:conv} we must have $g\tilde\beta = \tilde \beta$ for all $g \in G$. 
  Thus $\tilde \beta$ must belong to the unique invariant subspace of $G$, from which we conclude that $\tilde \beta = c\beta^*$.

\end{proof}

\subsection{Elaborating on remark about excess classification risk  in rotationally invariant case}


\begin{proof}
  Under rotational invariance, we have that $\norm{X}$ is independent of $Z = X/\norm{X}$. We can compute using \eqref{eq:cls-risk-step} that
  \begin{align*}
 & \bb{E}[|\bk{\beta^*}{X}|\mathbbm{1}\{\sign \bk{\beta^*}{X} \ne \sign \bk{\beta}{X}] \\
  &= \bb{E}[\norm{\beta^*}_2\norm{X}_2|\cos \theta(Z,\beta^*)|\mathbbm{1}\{Z \in S\}] \\
  &= \norm{\beta^*}_2\bb{E}\norm{X}_2 \bb{E}[|\cos \theta(Z,\beta^*)|\mathbbm{1}\{Z \in S\}] \\
  &= \norm{\beta^*}_2\bb{E}\norm{X}_2 \left(\frac{1}{\pi}\int_{0}^{\theta(\beta,\beta^*)} \sin t\, dt\right),
  \end{align*}
  where the last step uses the fact that $Z$ is uniformly distributed on the sphere. Finally, a similar computation shows that $\norm{\beta^*}_2\bb{E}\norm{X}_2 = \frac{\pi}{2}\bb{E}|\bk{\beta^*}{X}|$, completing the proof of our claim. 
\end{proof}

\end{document}